%% file: main.tex
\input{config_preprint}

\renewcommand{\hl}[1]{#1}

\begin{document}

\maketitle

\input{abstract}
\input{introduction}
\input{background}

\input{formulation}
\input{results}
\input{conclusion}

\section*{Data Availability}

The data that support the findings of this study are openly available at the following URL/DOI: \href{https://doi.org/10.5281/zenodo.22257448}{https://doi.org/10.5281/zenodo.22257448}~\cite{morrow2026enablingzenodo}.

\section*{Acknowledgements}

The work was supported by the U.S. Department of Energy's Office of Cybersecurity, Energy Security, and Emergency Response (Z.M., J.C., and D.J.K.), as well as the Laboratory Directed Research and Development program at Sandia National Laboratories (Z.M. and J.D.J.).

Sandia National Laboratories is a multimission laboratory managed and operated by National Technology and Engineering Solutions of Sandia, LLC, a wholly owned subsidiary of Honeywell International Inc., for the U.S. Department of Energy's National Nuclear Security Administration under contract DE-NA0003525. This paper describes objective technical results and analysis and has been authored by an employee of Technology \& Engineering Solutions of Sandia, LLC. Any subjective views or opinions that might be expressed in the paper do not necessarily represent the views of the U.S. Department of Energy (DOE) or the United States Government. The employee owns all right, title, and interest in and to the article and is solely responsible for its contents. The United States Government retains and the publisher, by accepting the article for publication, acknowledges that the United States Government retains a non-exclusive, paid-up, irrevocable, world-wide license to publish or reproduce the published form of this article, or allow others to do so, for United States Government purposes. The DOE will provide public access to the results of federally sponsored research in accordance with the DOE Public Access Plan (\url{https://www.energy.gov/downloads/doe-public-access-plan}). SAND2026-20737O

\input{appendices}

\def\bibfont{\small}
\setlength{\bibitemsep}{0em}
\printbibliography[heading=bibliography]

\end{document}

%% file: config_preprint.tex
\documentclass[11pt]{article}
\usepackage{graphicx} 
\usepackage[english]{babel}
\usepackage[T1]{fontenc}
\usepackage[utf8]{inputenc}
\usepackage{csquotes}
\usepackage{amssymb, amsbsy, amsthm, amsmath}
\usepackage[left=1in, right=1in, top=1in, bottom=1in]{geometry}
\usepackage{mathtools}
\usepackage{mathrsfs}
\usepackage{bm}
\usepackage{dsfont}
\usepackage{float}
\usepackage[colorlinks=true,
citecolor=blue,
urlcolor=blue,
linkcolor=blue]{hyperref}
\usepackage{subcaption}
\usepackage{soul}

\usepackage{caption}

\usepackage[toc,page]{appendix}
\usepackage[capitalize, nameinlink]{cleveref}
\crefformat{equation}{(#2#1#3)}
\newcommand{\crefrangeconjunction}{--}
\crefrangeformat{equation}{(#3#1#4)\crefrangeconjunction(#5#2#6)}
\crefname{app}{Appendix}{Appendices}
\Crefname{app}{Appendix}{Appendices}

\renewcommand\vec{\bm}
\newcommand\R{\mathbb{R}}
\newcommand\N{\mathbb{N}}
\newcommand\op{\mathscr{A}}
\newcommand\opinputspace{\mathcal{X}}
\newcommand\opoutputspace{\mathcal{Y}}
\newcommand\opinput{f}
\newcommand\opoutput{g}
\newcommand\basisinput{\phi}
\newcommand\basisoutput{\varphi}
\newcommand\ip[2]{\langle #1, #2 \rangle}
\renewcommand\d{\text{d}}
\newcommand\opspace{\mathcal{G}}
\newcommand\surrclass{\mathcal{C}}

\newcommand{\citeA}{\cite}
\newtheorem{theorem}{Theorem}[section]

\DeclareMathOperator*{\argmax}{arg\, max}

\newcommand{\zacknote}[1]{{\bf \color{red} [ZM: {#1}]}}

\usepackage[backend=bibtex, 
citestyle=numeric-comp,
bibstyle=numeric-comp,
sorting=nyt,
doi=true,
isbn=false,
maxnames=99,
giveninits=true,
uniquename=init,
url=true]{biblatex}
\renewbibmacro{in:}{}
\DefineBibliographyExtras{english}{%
	\uspunctuation%
}
\DeclareFieldFormat[misc,techreport,report]{title}{\mkbibquote{#1}}

\title{Enabling Real-Time Training of a Wildfire-to-Smoke Map with Multilinear Operators}
\author{Zachary Morrow,$^*$ Joseph Crockett, John D. Jakeman, Dan J. Krofcheck}
\date{\footnotesize \em Sandia National Laboratories \\[0em] 1515 Eubank Blvd SE, Albuquerque, NM 87123 \\[0em] $^*$ Corresponding author: \textrm{\tt zbmorro@sandia.gov}}

%% file: abstract.tex
\begin{abstract}
    Wildfires are a major producer of fine particulate matter, impacting human health and the electrical grid. Accurately forecasting smoke impacts over long time scales incorporates fuel treatment strategies, natural fuel succession, and stochastic events like lightning strikes. However, predicting smoke for each fuel distribution with a forward simulation of a coupled fire--atmosphere model is computationally infeasible. Moreover, relatively simple fire models are tractable to run in many long-time scenarios but do not capture smoke transport. We use data-driven multilinear operators to predict a smoke concentration field from knowledge of the time since ignition for two quantities of interest: aerosol optical depth and smoke detection. Our method first computes the principal components of time-since-ignition and smoke concentration fields and then learns a map from powers of the input coefficients to the output coefficients. We apply our learned operator to smoke prediction in the Upper Rio Grande Watershed. After collecting training data, learning the approximation weights on a CPU takes less than 30 seconds, and each forward call takes less than 1 ms. On a proxy for aerosol optical depth, we obtain equal accuracy to Monte Carlo sampling with fewer than half as many coupled model calls. For smoke detection, we obtain an intersection-over-union (IoU) of 0.64 and an area under the receiver operating characteristic curve (AUC) of 0.95 on holdout data. Our method is significantly more accurate than the most similar published smoke classifier, which obtains an IoU and AUC of 0.15 and 0.61, respectively, on a 2015 bushfire in Australia.
\end{abstract}

%% file: introduction.tex
\section{Introduction} \label{sec:intro}

Wildfires are an increasing threat to life, property, and the environment, with average fire size, frequency, and extent rising sharply across the western United States in recent decades \cite{iglesias2022us, abatzoglou2016impact}. Increases in temperature and vapor pressure deficit have enhanced fuel aridity, roughly doubling the cumulative western US forest area burned since the 1980s \cite{abatzoglou2016impact}. In addition to the damage from the fire itself, smoke impacts can extend far away from the burn area. Fine particulate matter under 2.5 microns in diameter (PM2.5) damages human health \cite{ma2024long, burke2023contribution} and burdens the electrical grid through deposits on power lines \cite{jazebi2020review} and through aerosol-induced reductions in photovoltaic output \cite{ford2024quantifying, corwin2025solar}. Smoke aerosols can additionally alter cloud microphysics and precipitation patterns \cite{twohy2021biomass}, with downstream hydrological consequences. Over decadal time-scales, fire behavior is difficult to predict because fuel succession, fire weather, and ignition regimes are all evolving simultaneously \cite{abatzoglou2016impact, balch2017human}. In southwestern US ecosystems such as the Upper Rio Grande Watershed (the focus of this study), a century of fire suppression has further produced fuel loads outside the historical range, exacerbating burn severity in landscapes that historically experienced frequent low-severity surface fire \cite{allen2002ecological}. Our ultimate goal is to predict electrical output reductions across many plausible fuel-management and climate scenarios over decadal time horizons. The slow-variable evolution (fuel succession, climate) is handled by ensemble sampling of fuel snapshots produced by a fuel-succession model; for each snapshot, our surrogate replaces an event-scale coupled fire--atmosphere simulation, mapping a time-since-ignition field to a cumulative smoke concentration field. From smoke concentration, power-system output reductions can in turn be related to aerosol optical depth (AOD) via published empirical relationships \cite{ali2023data, ford2024quantifying}. We work with the passive bulk smoke tracer reported by WRF-Chem as a proxy for total particulate emissions; speciation into PM2.5 and other constituents, conversion to AOD, and quantitative coupling to PV output are layered on top in subsequent work, with parameterization assisted by efforts such as the Fire and Smoke Model Evaluation Experiment \cite{prichard2019fire}. Both stages inform the choice of two quantities of interest (QoIs): (i) a proxy of AOD for each fire and (ii) smoke footprint determination.

The chief bottleneck of our planned outer-loop analysis is the cost--accuracy trade-off of a smoke model. Quantifying power-output reduction requires sampling fire and smoke output over a range of fuel distributions. Coupled fire--atmosphere models such as WRF-Fire are the standard tool for landscape-scale plume modeling because they capture the two-way feedback between the fire and the atmospheric boundary layer, but this coupling incurs high computational cost, rendering Monte Carlo sampling infeasible. In contrast, faster spread models are either decoupled from the atmosphere or do not consider smoke transport, rendering Monte Carlo sampling inaccurate. We therefore construct a surrogate model, a significantly cheaper but sufficiently accurate approximation, that maps time-since-ignition to high-fidelity smoke concentration. 

Surrogate models fall into two main categories. One relies on internal (``intrusive'') simplifications of the high-fidelity model. An intrusive reduced-order model like Galerkin projection is impractical due to code complexity, and simplified physics like decoupled fire--atmosphere dynamics is insufficiently accurate for our use-case. Operational reduced-complexity smoke pipelines such as BlueSky \cite{larkin2009bluesky} adopt this physics-simplified route by linking modular emissions, plume-rise, and dispersion models, and are well-suited to short-term forecasting. The other category, which we adopt here, constructs a {\em data-driven} surrogate model trained on coupled fire--atmosphere output, retaining plume-scale accuracy across a wide envelope of fuel and weather scenarios. Existing data-driven smoke surrogates focus on bulk regional impacts~\cite{larsen2021deep, ko2013spatiotemporal, sun2023satellite}, short-term forecasting~\cite{lahrichi2025improved, huot2022next}, or remote sensing~\cite{larsen2021deep, ko2013spatiotemporal, sun2023satellite}. Additionally, \ifx\zacknote\undefined \citeA{larsen2022spatial} \else Larsen et al.~\cite{larsen2022spatial} \fi use kriging and Markov chain Monte Carlo to perform causal analysis on a spatially varying smoke field. Ongoing research is investigating full-field ML tools for near-term wildfire forecasting~\cite{shadrin2024wildfire, lahrichi2025improved} or ignition inversion~\cite{shaddy2024generative, hart2025real}. \ifx\zacknote\undefined \citeA{long2022comparing} \else Long et al.~\cite{long2022comparing} \fi study of the effect of fuel treatment on smoke output in Lake Tahoe using a state-of-the-art fuel succession model. Nominal values of PM$_{2.5}$ for each fuel type and fire stage yield smoke in a postprocessing step, but the underlying fire model lacks sufficient accuracy. In contrast to these methods, however, we must map between entire spatial fields, not just scalar values.

Our surrogate model, called a {\em multilinear operator}, learns a nonlinear operator with a linear least-squares problem~\cite{turnage2025optimal}. We wish to learn the fire-to-smoke operator $\mathscr{A}[f]=g$, where $f$ is the time-since-ignition field and $g$ is smoke concentration. First, we express $f$ and $g$ as truncated linear combinations:
\begin{equation}
f(x, y) \approx \sum_{n=1}^{r} a_n \phi_n(x, y), \qquad \qquad g(x,y) \approx \sum_{n=1}^{\tilde{r}} b_n \varphi_n(x, y) \, .
\label{eq:basis}
\end{equation}
Here, $\phi_n$ and $\varphi_n$ are basis functions for the set of time-since-ignition fields and smoke fields, and $a_n$ and $b_n$ are the encodings of the inputs and outputs, respectively. We will fit a polynomial $p$ that maps $(a_1, \dots a_{r}) \in \R^{r}$ to $( b_1, \dots, b_{\tilde{r}}) \in \R^{\tilde{r}}$ by solving a linear least-squares problem based on input--output observations. To perform inference from a new fire $\hat{f}$, we compute the expansion coefficients $\hat{a}_1, \dots, \hat{a}_{r}$, evaluate $p(\hat{a}_1, \dots, \hat{a}_{r}) = (\hat{b}_1, \dots, \hat{b}_{\tilde{r}})$, and reconstruct $g(x,y)$ with the basis expansion~\cref{eq:basis}. 

Multilinear operators exhibit many key advantages over traditional neural network approaches. Existing approaches like deep neural networks (DNN) or convolutional neural networks (CNN) are data-hungry and time-consuming to train. First, training takes at most a matter of seconds, as it requires solving a single matrix--vector equation. Inference is even faster, requiring a matrix--vector product, enabling massive ensembles. Second, if there is low-rank structure in the basis expansion \cref{eq:basis}, then $r$ and $\tilde{r}$ can be small, reducing the data requirements in the least-squares problem. This contrasts with pure linear regression on observational data, which is tethered to the spatial discretization and neglects higher-order interactions. Third, multilinear operators also come equipped with constructive, rather than existence-based, universal approximation guarantees, as well as robust theoretical bounds for errors induced by finite data collection~\cite{turnage2025optimal}.

The main contribution of this work is the implementation of a highly efficient operator-learning framework for mapping full-field fire observations to full-field smoke concentration. We characterize our model using 7,339 high-fidelity fire--smoke snapshots from coupled WRF-Fire/WRF-Chem simulations in the Upper Rio Grande Watershed, generated from LANDIS-II ignition events under three drought scenarios with checkpoints throughout each fire's burn period. We observe training times of at most 30 seconds (and often far lower) and inference times of a fraction of a second. We obtain a mean AOD field that is an order of magnitude more accurate than a Monte Carlo estimate with an equal number of high-fidelity observations. By applying a threshold to determine the presence of smoke at a spatial location, our method also obtains an area under the receiver operating characteristic curve (AUC) of 0.95 and intersection-over-union (IoU) of 0.65, while the most similar prior work obtains an IoU of 0.15.

We organize the rest of this paper as follows. \Cref{sec:background} covers background material on fire and smoke modeling. \Cref{sec:formulation} reviews data-driven operator learning and presents multilinear operators, including a new theoretical result. \Cref{sec:results} showcases the results of smoke prediction in the Upper Rio Grande Watershed. We conclude in \Cref{sec:conclusion}. Throughout, we include details necessary to ensure the trustworthiness and reproducibility of our study~\cite{jakeman2025verification} and specifically document them in~\cref{sec:appendix}.

%% file: background.tex
\section{Fire and Smoke Modeling} \label{sec:background}

Generating each input--output pair to train our surrogate involves three component models: fuel succession, fire spread, and smoke emission. Stochastic fuel-succession models capture the evolution of combustible fuels over time in response to natural phenomena or human intervention. The outputs of this stage are time-evolved fuel maps and randomized fire ignition points. Fire spread models propagate a fire according to fuel content, terrain, and atmospheric wind using the Rothermel spread equations~\cite{rothermel1972mathematical, albini1976computer}, coupled with a weather model. In tandem, a smoke emission model traces particulate matter through the atmosphere at each time step~\cite{kochanski2012wrf, kochanski2015toward, kochanski2019modeling}.

\subsection{Fuel Succession} \label{sssec:landis}

The current state of smoke forecasts is generally limited to the near-term, based on current vegetation states, fire behavior models, and weather forecasts \cite{mallia2023review}. The chemical and smoke transport component of these models range from simple statistical plume or box models to computationally expensive, dynamic, coupled models that allow for speciation of emissions. The selection of a particular model depends on topography, weather, spatial range, and the specific use-case. While the simpler models can be sufficient for very near term outlooks, they are limited in other contexts. For example, HYSPLIT, while computationally efficient through trajectory-based modeling, is incapable of capturing complex chemistry~\cite{stein2015noaa}. Operational constraints can be imposed to meet computational demands, e.g. static-fire assumptions in BlueSky~\cite{larkin2009bluesky} or longer timesteps in HRRR-Smoke~\cite{benjamin2016north}. 

This near-term nature of smoke models limits their application for longer-term forecasting, such as future wildfires with smoke occurring from fuel management operations \cite{long2022comparing}. Longer-term forecasting requires modeling fuel succession in a range of climate and weather scenarios, stochastic application of fire across the simulated landscape, and smoke spread from fires of different size, location, and biomass consumption. Fuel succession models such as LANDIS-II~\cite{scheller2007design, scheller2019landscape}, REBURN~\cite{prichard2023reburn, povak2023system}, and Envision~\cite{bolte2007modeling, spies2014examining, spies2017using} simulate forest ecosystem dynamics across landscape-level scales (1--$10^4$ km$^2$) and across decades to centuries. A variety of strategies are incorporated to simplify local changes due to environmental drivers. 

Fuel succession models, however, do not have an atmospheric transport component, limiting their utility in assessing the smoke impacts of future and management scenarios. In contrast, while fire propagation models such as WRF-Fire are capable of spreading smoke during a fire, their computational expense and lack of an internalized succession model reduces their capacity to address this challenge. 

The LANDIS-II forest landscape model simulates forest change given species-specific parameters governing seedling establishment, growth rates, and mortality. LANDIS-II is similar to other ecosystem models, e.g. \cite{bugmann2022evolution}, but possesses additional controls over species parameters with the PnET (Photosynthesis and Evapo-Transpiration) extension~\cite{gustafson2023pnet}. Likewise, the Dynamic Fuels and Fire System (DFFS) extension simulates fire and fuel interactions on these landscapes and incorporates their effects into the annualized landscape outputs \cite{sturtevant2009simulating}. The DFFS uses simple Huygens-based algorithms, similarly to FARSITE, for fire spread, which are dependent on fuel type, weather, and topography \cite{sturtevant2009simulating}. Additionally, LANDIS-II can simulate different fuel-management scenarios, including the effects of fuel thinning and prescribed burning at various levels of intensity. These simulations provide grid-cell-level fuel types and ignition points for each year. These fuel layers, as well as internal model details are described in more detail in \cite{remy2024restoring, scheller2007design}.  

\subsection{Fire Propagation} \label{sssec:wrf-fire}

Although a comprehensive review of fire-propagation models is beyond the scope of this paper, we offer a brief account of their historical development; see \cite{bakhshaii2019review} for a discussion of next-generation models. One of the first full-fledged models is BEHAVE~\cite{andrews1986behave, andrews2013current}, which is point-based and local. Later models incorporated spatial variability of the fire. FARSITE, later FlamMap, employs Huygens' principle for vector propagation in a two-dimensional plane but is decoupled from atmospheric processes. Neglecting feedback between the fire and the atmosphere lowers model accuracy. Like FlamMap, CAWFE uses vector propagation, but generalizes from Huygens' principle and couples the fire and atmosphere via the Clark--Hall weather model~\cite{clark1996coupled, clark1996coupled2, coen2013modeling, clark2004description, clark1977small, clark1991multi}. In contrast, HIGRAD/FIRETEC utilizes a coupled system of partial differential equations (PDEs) describing temperature, fuel content, atmospheric winds, and chemical species~\cite{linn1997transport, linn2002studying, clark2010subgrid, dupuy2011exploring, reisner2000coupled}. For in-the-field use, QUIC-fire~\cite{linn2020quicfire} simplifies FIRETEC with an approximation of both the wind and fire-spread models. Lastly, level-set methods for wildfires use a PDE to track the fire front~\cite{mallet2009modeling, osher1988fronts}, bypassing vector propagation. Using the Weather Research and Forecasting (WRF) atmospheric model~\cite{skamarock2019wrf}, WRF-Fire is a fully coupled fire--atmosphere model based on level sets~\cite{mandel2011coupled, coen2013wrf, munozesparza2018accurate}. WRF-Fire largely superseded CAWFE and is the simulation code used in this paper.

The WRF backend is a mesoscale numerical weather prediction model developed by the National Center for Atmospheric Research to provide open-source real-time forecasting, research, and analysis across spatial and temporal scales~\cite{skamarock2019wrf}. WRF-Fire couples a fire to the atmosphere through the lowest atmospheric level, using wind, topography, and fuel to compute the outward-normal spread rate $S$ via Rothermel's equations~\cite{rothermel1972mathematical, albini1976computer}. As the fire spreads, the ignited fuels burn until complete combustion. WRF-Fire uses Anderson's 13 fuel categories~\cite{anderson1982aids} by default, but we use Scott and Burgan's 40 fuel categories~\cite{scott2005standard} for consistency with LANDIS, as in \cite{decastro2022computationally}. The resolution of the fire grid within WRF-Fire is limited by the spatial scales of topographic and fuel inputs (here, LANDIS-II fuels).

WRF-Fire has two component models for the fire and atmosphere. The level-set equation describes the propagation of the fire front:
\begin{equation}
    \frac{\partial \psi}{\partial t} + S(\vec{u}, \vec{x}, t)\| \nabla \psi\|_2 = 0, 
    \label{equ:lfn}
\end{equation}
where $\psi(\vec{x}, t)$ is the level-set function, parameterized as the signed distance to the fire front~\cite{mallet2009modeling, osher1988fronts}. The time-since-ignition, i.e. the input to the surrogate model, is given by
\begin{equation}
    f(x, y, t) = \begin{cases} 
    t - \sup \{ t_0 : \psi(x,y,t_0) \geq 0 \}, \qquad & \text{if}~\psi(x,y,t) < 0 \\
    0, & \text{otherwise}
    \end{cases} \ .
\end{equation}
The Rothermel spread rate $S$ is based on fuel (moisture, density, ratio of surface area to volume), topography, and winds $\vec{u} = (u, v, w)$. In turn, the heat output of the fire appears as a forcing term in the WRF atmospheric model for $\vec{u}$, yielding two-way coupling:
\begin{subequations}
    \label{equ:euler}
    \begin{align}
        \frac{\partial \rho}{\partial t} + \vec{u} \cdot \nabla \rho &= 0 \\
        \frac{\partial \vec{u}}{\partial t} + \vec{u} \cdot \nabla \vec{u} &= -\frac{\nabla p}{\rho} + \nabla V \\
        \frac{\partial T}{\partial t} + \nabla \cdot(\vec{u} T)&= f(\psi)\\
        \frac{\partial V}{\partial t} + \vec{u} \cdot \nabla V &= gw
    \end{align}
\end{subequations}
In~\cref{equ:euler}, $\rho$ is the density of air, $T$ is temperature, $V$ is the geopotential height, and $p$ is pressure. Pressure and temperature are related via
$$
p = p_0 \left( \frac{\rho R T}{p_0} \right)^{c_p/c_v} 
$$
where $p_0 = 10^5$ Pa is the surface pressure, $R$ is the specific gas constant for dry air, $c_p = 1005$ J/(kg$\cdot$K) is the specific heat of dry air at constant pressure, and $c_v = 718$ J/(kg$\cdot$K) is the specific heat of dry air at constant volume. Equation \Cref{equ:lfn} is discretized in space with the finite-volume method and in time with the explicit second-order Runge--Kutta method~\cite{mandel2011coupled}. Equation \Cref{equ:euler} is discretized in space with finite differences~\cite{knievel2007explicit} and in time with the explicit third-order Runge--Kutta method~\cite{mandel2011coupled}. Operator splitting maintains the coupling between \crefrange{equ:lfn}{equ:euler}. Boundary conditions come from either idealized Dirichlet conditions or real weather data; see \Cref{sec:setup}.

\subsection{Smoke Transport} \label{sec:wrf-chem}

Additional modules have allowed WRF-Fire to inject passive smoke tracers to the atmosphere \cite{kochanski2012wrf, kochanski2015toward, kochanski2019modeling, anderson2004fire}. WRF-Chem models the transport of chemical species ($c$) in the atmosphere via the convection--diffusion--reaction equation
\begin{equation}
\frac{\partial c}{\partial t} - \kappa \nabla^2 c + \nabla \cdot (\vec{u} c) + \gamma(c) = 0 \, , \qquad \vec{x} \in \Omega \subset \R^3,~t>0,
\label{equ:smoke_conc}
\end{equation}
where $\gamma$ reflects reactions among species, the atmosphere, and each other. Smoke amounts per timestep are a function of the pixel-wise burn fractions and fuel density, as well as an empirical factor representing the percentage of burned fuel emitted as smoke, provided as 0.02 by default in WRF-Chem~\cite{grell2005fully}.

We require smoke concentration $c$ to have units of $\mu$g/m$^3$, but WRF-Chem reports the mass mixing ratio of smoke in units of \textit{g smoke/kg dry air}. Using the ideal gas law, we convert to concentration via
\begin{equation}
    c_{\text{g/m}^3} = c_{\text{g/kg}} \left(  \frac{p}{R \ T_{\text{a}}} \right)
\end{equation}
where $R = 287$ J/(kg$\cdot$K) is the specific gas constant of dry air, and $T_{\text{a}}$ is the actual, not potential, temperature:
$$
T_{\text a} = T \left( \frac{p}{p_0} \right)^{R/c_p} \ .
$$
As the variable smoke represents the sum total of all possible emissions from a burned fuel source, we can assume that the plume shape and concentration is representative of any emission from the fire. 

We project $c$ onto the $(x, y)$ plane as the cumulative mean value over the smoke column to define the output of the surrogate model:
\begin{equation}
    \label{equ:smoke}
    g(x, y, t_k) = \sum_{j=0}^{k} \ \frac{1}{z_{\text{top}}-z_0} \int_{z_0}^{z_{\text{top}}} c(x, y, z, t_j)\, \d z
\end{equation}
where $c(x, y, z, s)$ is the smoke concentration at time $s$ in $\mu\text{g} / \text{m}^3$.  Since \cref{equ:smoke_conc} is spatially discretized with the finite-volume method, we can straightforwardly approximate~\cref{equ:smoke} using the second-order composite midpoint rule. Importantly, we define \cref{equ:smoke} so that $g$, like time-since-ignition, is non-decreasing in time:
$$
g(x, y, t_1) \leq g(x, y, t_2) \qquad \text{for all } x, y\in \R~\text{and}~t_1 \leq t_2 \, ,
$$
Thus, each ultimate smoke plume $g$ represents both the complete spatial extent of the plume up to that point and locations with the highest smoke concentrations. This allows each time step $g$ to describe the smoke plume's spatial extent up until that time step. This monotonicity avoids convection-dominated dynamics, which is crucial for an accurate basis encoding later (\Cref{ssec:pca}).

%% file: formulation.tex
\section{Operator Learning} \label{sec:formulation}

Mapping one spatial field to another requires operator learning, which can be broadly categorized into four main categories: dynamical systems, integral operators, unknown basis functions, and known basis functions. Many of the earliest contemporary works on operator learning are concerned with dynamical systems.  Operator Inference first projects a system onto a reduced basis with proper orthogonal decomposition (POD) and then learns a time-propagation map via a least-squares fit~\cite{peherstorfer2016data, qian2020lift}. Flow Map Learning~\cite{churchill2023flow} parametrizes the time-evolution map as a feedforward network, possibly in a basis representation~\cite{churchill2025principal}, while \ifx\zacknote\undefined \citeA{patel2018nonlinear} were \else \cite{patel2018nonlinear} was \fi the first to approximate a nonlinear time-evolution operator directly in Fourier space. Another approach here is to learn a nonlinear input-to-observables map and then learn the linear Koopman operator~\cite{yeung2017learning}. 

The latter three categories, however, are applicable to general input--output maps. Many methods treat the layers of a network as a composition of local and integral operators passed through nonlinear activation functions. Graph Neural Operators treat the inputs to a convolution as the edge features of a graph~\cite{li2020neural}. Fourier Neural Operators~\cite{li2021fourier} exploit the Fourier convolution theorem to learn convolution operators in Fourier space, possibly with low-rank tensor decompositions~\cite{kossaifi2024multigrid}. More generally, Kernel Neural Operators parametrize a kernel as a neural network and do not apply forward or inverse transforms repeatedly~\cite{lowery2025kernel}. To learn unknown basis functions, Deep Operator Networks (DeepONets) utilize two separate networks to learn a function basis for the output space (trunk network) and a set of combination coefficients (branch network)~\cite{lu2021learning}.  The last category of methods assumes a known linear basis for the input and output spaces, which enables an offline transform before learning a discretization-invariant mapping. A variant of DeepONet, using a PCA representation for the trunk, falls into this category~\cite{lu2022comprehensive}. Motivated by aliasing errors arising from repeated forward/inverse transforms, Spectral Neural Operators~\cite{fanaskov2023spectral} use a standard feedforward network to learn the coefficient-to-coefficient map. 

Multilinear operators~\cite{turnage2025optimal}, which are the focus of this paper, learn a polynomial map between the input and output coefficients with a least-squares solve, which is orders of magnitude faster than training deep neural network models. Furthermore, multilinear operators come equipped with theoretical guarantees on data collection, as well as {\em constructive}, rather than existence-based, universal approximation theorems. 

This section will first develop the general formulation of data-driven operator learning, followed by multilinear operators. We will then discuss our choice of $\phi$ and $\varphi$ for the encoding/decoding step. With practical simplifications for the use-case of smoke modeling, we can derive a closed-form expression for the approximation weights with a linear operator. We then discuss higher-order operators and end with a brief discussion of computational cost.

\subsection{Data-Driven Operator Learning}

Let $\op: \opinputspace \to \opoutputspace$ be an operator, i.e. $\op [\opinput] = \opoutput$ where $f$ and $g$ are functions \hl{representing inputs and outputs of interest}. Data-driven operator learning approximates $\op$ using input--output pairs $\{ (\opinput^{(m)}, \opoutput^{(m)}) \}_{m=1}^M$. For our setting, we use the weighted function spaces $\opinputspace = L^2_\eta(\Omega; \R)$ and $\opoutputspace = L^2_\nu(\Omega; \R)$, where $\eta$ and $\nu$ are probability measures and $\Omega \subset \R^2$ is compact. Since $\opinputspace$ and $\opoutputspace$ are separable Hilbert spaces, there exist orthonormal bases $\{ \basisinput_n(\vec{x}) \}_{n=1}^\infty$ for $\opinputspace$ and $\{ \basisoutput_n(\vec{y}) \}_{n=1}^\infty$ for $\opoutputspace$ \hl{that represent the inputs and outputs}. These bases, \hl{which are in general not the same}, have the properties
\begin{alignat*}{3}
    \ip{\basisinput_i}{\basisinput_j}_{\opinputspace} &= \delta_{ij}, \qquad &&  
    \opinput(\vec{x}) &&= \sum_{n=1}^\infty \ip{\opinput}{\basisinput_n}_\opinputspace \, \basisinput_n(\vec{x})~~\text{almost everywhere}, \\
    \ip{\basisoutput_i}{\basisoutput_j}_{\opoutputspace} &= \delta_{ij}, \qquad && \opoutput(\vec{y}) &&= \sum_{n=1}^\infty \ip{\opoutput}{\basisoutput_n}_\opoutputspace \, \basisoutput_n(\vec{y})~~\text{almost everywhere}, 
\end{alignat*}
for all $\opinput \in \opinputspace$ and $\opoutput \in \opoutputspace$. For $u, v \in \opinputspace$, the inner product $\ip{u}{v}_\opinputspace$ is given by
$$
\ip{u}{v}_{\opinputspace} \coloneqq \int_{\Omega} u(\vec{x}) v(\vec{x}) \, \d \eta(\vec{x}) \, ,
$$
and similarly for the inner product on $\opoutputspace$. After truncating the inputs and outputs at index $r$ and $\tilde{r}$ respectively, the crux of operator learning is to approximate the {\em finite-dimensional} map
$$
(\ip{f}{\phi_1}_\opinputspace, \dots, \ip{f}{\phi_r}_\opinputspace) \qquad \mapsto \qquad (\ip{g}{\varphi_1}_\opoutputspace, \dots, \ip{g}{\varphi_{\tilde{r}}}_\opoutputspace)
$$
between input and output coefficients.

\subsection{Multilinear Operators}

To learn a suitable approximation $\hat{\op}$ of $\op$, we must first define a class of candidate operators. Finite-dimensional polynomial regression approximates a function using the basis $\{ P_k(x) \}_{k=0}^\infty$, where $P_k(x)$ is a polynomial of degree $k$. Analogously, we can express $\op$ as a linear combination of polynomial basis elements:
\begin{equation}
\hat{\op}_{\vec{\theta}}[f] = \sum_{i=1}^{\tilde{r}} \ \ \sum_{\vec{j} \in \Lambda \subset [d]_0^{r}} \theta_{i,\vec{j}} \ \underbrace{\basisoutput_i \prod_{k=1}^{r} P_{j_k} (\ip{\basisinput_{k}}{\opinput})}_{\Phi_{i, \vec{j}}[\opinput]} \, .
\label{equ:approximate_operator}
\end{equation}
We denote the candidate class of such operators $\surrclass = \{ \hat{\op}_{\vec{\theta}} : \vec{\theta} \in \N_0 \times \Lambda \}$. In~\cref{equ:approximate_operator}, $\theta_{i, \vec{j}}$ are the trainable parameters, $\vec{j}$ is a vector of natural numbers, $\Lambda$ is the set of admissible indices,\footnote{$\Lambda$ must also be lower-complete, i.e. for each $\vec{i} \in \Lambda$, $\{ \vec{j} \in \N_0^r: j_k \leq i_k~\text{for all}~1 \leq k \leq r\} \subset \Lambda$.} $d$ is the maximal polynomial degree, and $[d]_0 = \{0, 1, \dots, d\}$. Importantly, even though \cref{equ:approximate_operator} can be nonlinear in $f$, it is {\em linear} in $\vec{\theta}$, namely a linear combination of rank-one operators $\Phi_{i, \vec{j}}$.

With a suitable inner product on the candidate space, we can solve for $\vec{\theta}$ with least-squares {\em very efficiently.} Let $\opspace = L^2_\rho(\opinputspace; \opoutputspace)$ with the inner product
$$
\ip{\op}{\mathscr{B}}_\opspace = \int_\opinputspace \ip{\op[\opinput]}{\mathscr{B}[\opinput]}_\opoutputspace \ \d \rho (\opinput), \qquad \| \op \|^2_\opspace = \ip{\op}{\op}_\opspace \, .
$$
We want to choose $\theta_{i, \vec{j}}$ to solve
\begin{equation}
\min_{\vec{\theta}} \| \op - \hat{\op}_{\bm \theta} \|_\opspace = \min_{\hat{\op} \in \surrclass} \| \op - \hat{\op} \|_{\opspace} \, .
\label{equ:lstsq}
\end{equation}
Fortunately, \cref{equ:lstsq} has a known solution. To keep the notation simple, we introduce a lexicographic index for $\vec{\theta}$ and $\Phi$, i.e. one that unravels the multi-index notation. The optimal $\vec{\theta}$ solves the linear system
\begin{equation}
    \vec{G} \vec{\theta} = \vec{y}, \qquad \vec{G}_{ij} = \ip{\Phi_i}{\Phi_j}_\opspace, \qquad \vec{y}_i = \ip{\Phi_i}{\op}_\opspace \, ,
    \label{equ:gramian}
\end{equation}
where $\vec{G}$ is the Gram matrix. Since the size of $\vec{G}$ grow linearly in the basis size, and the basis grows quickly with increasing degree $d$ unless $\Lambda$ is sparse, we limit ourselves to $d \leq 2$.

\subsection{Basis Selection} \label{ssec:pca}

There are many possible choices for $\phi$ and $\varphi$ above, including orthogonal polynomials~\cite{sharma2026polynomial, herrmann2024neural, westermann2026performance}, a Fourier basis~\cite{li2021fourier, kossaifi2024multigrid}, or eigenfunctions of a known Mercer kernel~\cite{lowery2025kernel}. Here we briefly overview principal component analysis~\cite{hotelling1933analysis, shlens2014tutorial}, which gives an empirical approximation of the eigenfunctions of an unknown covariance kernel from input--output data \hl{corresponding to specific simulation variables $f$ and $g$}. For simplicity, we present the snapshot formulation rather than the infinite-dimensional formulation.

Denote column-ordered input and output snapshot matrices as
$$
\vec{S}_{ij} = f^{(j)}(x_i), \qquad \vec{\tilde S}_{ij} = g^{(j)}(y_i)
$$
where $\vec{S} \in \R^{N \times M}$ and $\vec{\tilde S} \in \R^{\tilde{N} \times M}$ with $M > \max \{ N, \tilde{N} \}$. Without loss of generality, assume the snapshots are mean-centered. From the singular value decomposition (SVD) of $\vec{S}$ and $\vec{\tilde{S}}$
\begin{equation}
\vec{S} = \vec{U} \vec{\Sigma} \vec{V}^\top, \qquad \vec{\tilde S} = \vec{\tilde U} \vec{\tilde \Sigma} \vec{\tilde V}^\top \, ,
\label{equ:svd}
\end{equation}
we obtain the principal components (columns of $\vec{U}$ and $\vec{\tilde{U}}$) of the snapshots, along with singular values $\sigma_1 \geq \cdots \geq \sigma_N$ that measure the components' influence. The matrix $\vec{U}$ is $N \times N$ orthogonal, $\vec{\Sigma} = \text{diag}(\sigma_1, \dots, \sigma_N)$, and $\vec{V}$ is $M \times N$ with orthonormal columns (and similarly for $\vec{\tilde{S}}$). 

By choosing some truncation levels $r \ll N$ and $\tilde{r} \ll \tilde{N}$ for which $\sigma_r > 0$ and $\tilde{\sigma}_{\tilde{r}} > 0$, we get the proper orthogonal decomposition:
\begin{equation}
\begin{alignedat}{3}
\vec{\hat{f}} &= \vec{U}_r^\top \vec{f} \in \R^r, \qquad \vec{S}_r &&= \vec{\Sigma}_r \vec{V}_r^\top \in \R^{r \times M}\, , \\
\vec{\hat{g}} &= \vec{\tilde{U}}_{\tilde{r}}^\top \vec{g} \in \R^{\tilde{r}}, \qquad \vec{\tilde{S}}_{\tilde{r}} &&= \vec{\tilde \Sigma}_{\tilde{r}} \vec{\tilde V}_{\tilde{r}}^\top\, \in \R^{\tilde{r} \times M} \, .
\end{alignedat}
\label{equ:pca_basis}
\end{equation}
The reduced matrix $\vec{U}_r$ is the submatrix of the first $r$ columns of $\vec{U}$, and $\vec{\Sigma}_r = (\sigma_1, \dots, \sigma_r)$.

\subsection{Practical Simplifications}
In general, computing $\ip{\Phi_i}{\Phi_j}_\opspace$ is not tractable. We estimate the inner product by computing the sample expectation 
$$
\ip{\Phi_i}{\Phi_j}_\opspace \approx \ip{\Phi_i}{\Phi_j}_{\opspace, M} \coloneqq \frac{1}{M} \sum_{m=1}^M \ip{\Phi_i[\opinput^{(m)}]}{\Phi_j[\opinput^{(m)}]}_\opoutputspace \ .
$$
We can also leverage the physical properties of a fire-to-smoke map to simplify the candidate class~\cref{equ:approximate_operator}. First, we know that $\op$ is non-affine because an empty fire never results in positive smoke, so we restrict $P_j$ to $j \geq 1$. Second, fire sizes are sampled from an unknown probability distribution $\d \rho(f)$. It is desirable for $P_j$ to be a specific orthogonal polynomial family based on $\rho$~\cite{turnage2025optimal}, so that $\vec{G} = \vec{I}$ in~\cref{equ:gramian}. However, since $\rho$ is not known in closed form, we use a low-degree monomial basis $P_j(x) = x^j$ at the trade-off of a larger condition number in~\cref{equ:gramian}. Third, in keeping with the PCA basis, we use an unweighted inner product for $\opinputspace$ and $\opoutputspace$. We interpret the reduced basis vectors $\vec{u}_n$ and $\vec{\tilde{u}}_n$ as discretizations of orthogonal basis functions $\{ \phi_n(x)\}_{n=0}^r$ and $\{ \varphi_n(y) \}_{n=0}^{\tilde{r}}$, where the $L^2$ inner product over over $\opinputspace$ and $\opoutputspace$ is approximated with the Euclidean dot product.

\subsection{Linear Case}
For a linear operator, we can make even further simplifications. Here, $\Lambda$ in~\cref{equ:approximate_operator} contains the purely linear elements, omitting the constant and quadratic terms:
$$
\Lambda = \{ 0, 1 \}^{\tilde{r}} \ \backslash \ (\{ 0, \dots, 0 \} \cup \{ 1, \dots, 1\}) \, .
$$
Then we get
$$
\hat{\op}_{\vec{\theta}}[f] = \sum_{i=1}^r \sum_{j=1}^{\tilde{r}} \theta_{i,j} \ \basisoutput_i \ip{\basisinput_{j}}{\opinput} \, ,
$$
which has a closed-form solution given by the following theorem.

\begin{theorem}
    The solution of \cref{equ:lstsq} over a linear candidate space with the PCA basis \cref{equ:pca_basis} is
    \begin{equation} 
        \vec{\theta} = \vec{\tilde{\Sigma}}_{\tilde{r}} \vec{\tilde V}_{\tilde{r}}^\top \vec{V}_r \vec{\Sigma}_r^{-1} \, .
        \label{equ:lstsq_pca_linear}
    \end{equation} 
    \label{thm:lstsq_pca_linear}
\end{theorem}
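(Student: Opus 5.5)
The idea is to write the discrete least-squares problem in the PCA coordinates and show that its normal equations \cref{equ:gramian} reduce to an ordinary matrix regression whose Gram matrix is diagonal. Throughout we use the practical simplifications: the $\opspace$ inner product is replaced by the sample average $\ip{\cdot}{\cdot}_{\opspace,M}$, and the $L^2$ inner products on $\opinputspace$ and $\opoutputspace$ are replaced by Euclidean dot products. The basis functions $\phi_n$ and $\varphi_n$ are the columns of $\vec{U}_r$ and $\vec{\tilde U}_{\tilde r}$.

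For the $m$-th snapshot, the candidate output is
\[
\hat{\op}_{\vec\theta}[f^{(m)}] = \vec{\tilde U}_{\tilde r}\,\vec\theta\,\vec{U}_r^\top \vec f^{(m)},
\]
where $\vec\theta$ is viewed as a $\tilde r\times r$ matrix. Stacking the snapshots and using \cref{equ:svd}, we have $\vec U_r^\top \vec S=\vec\Sigma_r\vec V_r^\top=\vec S_r$. The objective in \cref{equ:lstsq} (up to the constant $1/M$) is then
\[
\|\vec{\tilde S}-\vec{\tilde U}_{\tilde r}\vec\theta\vec S_r\|_F^2 .
\]

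Next we split $\vec{\tilde S}=\vec{\tilde U}_{\tilde r}\vec{\tilde U}_{\tilde r}^\top\vec{\tilde S}+(\vec I-\vec{\tilde U}_{\tilde r}\vec{\tilde U}_{\tilde r}^\top)\vec{\tilde S}$. The second piece is orthogonal to the range of $\vec{\tilde U}_{\tilde r}$, and $\vec{\tilde U}_{\tilde r}$ has orthonormal columns. Together these give
\[
\|\vec{\tilde S}_{\tilde r}-\vec\theta\vec S_r\|_F^2+\text{const},
\qquad \vec{\tilde S}_{\tilde r}=\vec{\tilde\Sigma}_{\tilde r}\vec{\tilde V}_{\tilde r}^\top .
\]
This is exactly the statement that the Gram system \cref{equ:gramian} decouples across output indices $i$. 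The entry $\ip{\Phi_{i,j}}{\Phi_{i',j'}}_{\opspace,M}$ equals $\delta_{ii'}(\vec S_r\vec S_r^\top)_{jj'}/M$, and the right-hand side is $(\vec{\tilde S}_{\tilde r}\vec S_r^\top)_{ij}/M$. The normal equations are therefore
\[
\vec\theta\,\vec S_r\vec S_r^\top=\vec{\tilde S}_{\tilde r}\vec S_r^\top .
\]

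The key simplification is that $\vec V_r$ has orthonormal columns, so $\vec S_r\vec S_r^\top=\vec\Sigma_r\vec V_r^\top\vec V_r\vec\Sigma_r=\vec\Sigma_r^2$. This matrix is invertible because $\sigma_r>0$, so the minimizer is unique. Solving gives
\[
\vec\theta=\vec{\tilde\Sigma}_{\tilde r}\vec{\tilde V}_{\tilde r}^\top\vec V_r\vec\Sigma_r\vec\Sigma_r^{-2}=\vec{\tilde\Sigma}_{\tilde r}\vec{\tilde V}_{\tilde r}^\top\vec V_r\vec\Sigma_r^{-1},
\]
which is \cref{equ:lstsq_pca_linear}.

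The main obstacle is bookkeeping rather than analysis. We must match the paper's index conventions for $\theta_{i,j}$: the displayed linear operator sums over $i\le r$ and $j\le\tilde r$ but pairs $\varphi_i$ with $\phi_j$, so we have to fix the orientation so that $\vec\theta$ is $\tilde r\times r$. We must also justify that $\Phi_{i,j}$ as defined in \cref{equ:approximate_operator} reduces to $\varphi_i\ip{\phi_j}{f}$ when $\Lambda$ contains only the unit multi-indices. Finally, we should note that mean-centering, assumed in \cref{ssec:pca}, is compatible with omitting the constant term.
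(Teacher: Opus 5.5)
Your proof is correct and follows essentially the same route as the paper. The paper computes the sampled Gram entries $\frac{1}{M}\delta_{ik}(\vec{\Sigma}_r^2)_{j\ell}$ and right-hand side $\frac{1}{M}(\vec{\tilde{\Sigma}}_{\tilde{r}}\vec{\tilde{V}}_{\tilde{r}}^\top\vec{V}_r\vec{\Sigma}_r)_{ij}$ entrywise, using the same two facts you use (orthonormal columns of $\vec{\tilde{U}}_{\tilde{r}}$ and $\vec{V}_r^\top\vec{V}_r=\vec{I}$), and then solves $\vec{\theta}\vec{\Sigma}_r^2=\vec{\tilde{\Sigma}}_{\tilde{r}}\vec{\tilde{V}}_{\tilde{r}}^\top\vec{V}_r\vec{\Sigma}_r$; your Frobenius-norm/projection framing is a matrix repackaging of that computation that additionally makes explicit the uniqueness (via $\sigma_r>0$) and the index-convention fixes the paper leaves implicit.
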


\begin{proof}
Using the original double-indexing on $\Phi$, the Gram matrix elements become
\begin{align*}
\ip{\Phi_{i,j}}{\Phi_{k, \ell}} &\approx \frac{1}{M} \sum_{m=1}^M (\vec{\tilde u}_i^\top \vec{\tilde u}_k) (\vec{u}_j^\top \vec{f}^{(m)}) (\vec{u}_\ell^\top \vec{f}^{(m)}) \\
&= \delta_{ik} \frac{1}{M} \sum_{m=1}^M \sigma_j (\vec{V}^\top_r)_{jm} \sigma_{\ell} (\vec{V}^\top_r)_{\ell m} \\
&= \delta_{ik} \frac{1}{M} \sum_{m=1}^M \sigma_j (\vec{V}^\top_r)_{jm} (\vec{V}_r)_{m \ell} \sigma_\ell \\
&= \delta_{ik} \frac{1}{M} (\vec{\Sigma}_r \vec{V}_r^\top \vec{V}_r \vec{\Sigma}_r)_{j\ell}  \\
&= \frac{1}{M} \delta_{ik}(\vec{\Sigma}_r^2)_{j\ell} \ .
\end{align*}
The right-hand side becomes
\begin{align*}
    \ip{\Phi_{i,j}}{\op} &\approx \frac{1}{M} \sum_{m=1}^M (\vec{\tilde{u}}_i^\top \vec{g}^{(m)}) ( \vec{u}_j^\top \vec{f}^{(m)} ) \\
    &= \frac{1}{M} \sum_{m=1}^M \tilde{\sigma}_i (\vec{\tilde{V}}_r^\top)_{im} \sigma_j (\vec{V}^\top_r)_{jm} \\
    &= \frac{1}{M} \sum_{m=1}^M \tilde{\sigma}_i (\vec{\tilde{V}}^\top_r)_{im} (\vec{V}_r)_{mj} \sigma_j  \\
    &= \frac{1}{M} (\vec{\tilde \Sigma}_r \vec{\tilde V}_r^\top \vec{V}_r \vec{\Sigma}_r)_{ij} \, .
\end{align*}
So the coefficients $\vec{\theta} = (\theta_{i,j}) \in \R^{\tilde{r} \times r}$ solve the matrix equation
$$
\vec{\theta}\vec{\Sigma}^2_r  = \vec{\tilde{\Sigma}}_{\tilde{r}} \vec{\tilde V}_{\tilde{r}}^\top \vec{V}_r \vec{\Sigma}_r,
$$
from which the result follows.
\end{proof}

\subsection{Quadratic Case}

In the nonlinear case, there is not a closed-form expression like~\cref{equ:lstsq_pca_linear}. However, there is considerable ease in constructing the Gram matrix $\vec{G}$ and right-hand side $\vec{y}$.

Denote by $\vec{A} = \vec{V}_r \vec{\Sigma}_r \in \R^{M \times r}$ the transpose of the reduced-dimensional input snapshot matrix. Let $\vec{M}$ be the matrix containing the linear and quadratic interaction terms of $\vec{A}$. So
$$
\vec{M} = \begin{bmatrix} \vec{a}_1, \cdots, \vec{a}_r \ \big| \ \vec{a}_1\odot \vec{a}_1, \, \cdots, \vec{a}_1 \odot \vec{a}_r, \ \ \vec{a}_2 \odot \vec{a}_2, \, \cdots,  \vec{a}_2 \odot \vec{a}_r, \ \cdots, \ \vec{a}_{r} \odot \vec{a}_r \end{bmatrix} \, ,
$$
where $\vec{a}_k$ is the $k$-th column of $\vec{A}$ and $\odot$ denotes the Hadamard (componentwise) product. Then the coefficients $\bm \theta$ solve the matrix equation
$$
\vec{M}^\top \vec{M} \vec{\theta}^\top = \vec{M}^\top \vec{\tilde V}_{\tilde{r}} \vec{\tilde \Sigma}_{\tilde{r}} \, . 
$$
Now, $\vec{M}$ has $r + r(r+1)/2$ columns, so if $M \ll r^2$, then $\vec{M}^\top \vec{M}$ is singular. One possible mitigation approach is to incorporate Tikhonov regularization with a parameter $\lambda > 0$. Then the matrix equation becomes
\begin{equation}
( \vec{M}^\top \vec{M} + \lambda \vec{I}) \vec{\theta}^\top = \vec{M}^\top \vec{\tilde V}_{\tilde{r}} \vec{\tilde \Sigma}_{\tilde{r}} \, , 
\label{equ:lstsq_pca_quadratic}
\end{equation} 
and the optimal choice of $\lambda$ can be tuned as a hyperparameter. One can interpret $\lambda$ either as a penalty term on large values of $\vec{\theta}$ or, in the Bayesian view, as arising from an i.i.d. Gaussian prior distribution on $\vec{\theta}$ with mean zero and variance $1/\lambda$. 

\subsection{Cost Analysis}

Using the PCA representation, $\vec{\theta}$ in the linear case comes from three matrix multiplications, two of which are diagonal. The cost of computing~\cref{equ:lstsq_pca_linear} is $O(2r\tilde{r} + Mr \tilde{r})$. As a result, the cost of computing $\vec{\theta}$ is negligible after two SVDs.

The quadratic case, however, is noticeably more expensive than the linear case. The cost of solving~\cref{equ:lstsq_pca_quadratic} involves computing $\vec{M}^\top \vec{M}$ and $\vec{M}^\top \vec{\tilde{V}}_{\tilde{r}}$, which is $O(Mr^4)$ work; solving a linear system of size $r + r(r+1)/2$, which is $O(r^6)$ work; and applying the solution to each of the $\tilde{r}$ columns of the right-hand side, which is $O(\tilde{r} r^4)$ work. However, for any reasonable $r$ and $\tilde{r}$, with modern linear algebra software, this cost is still quite small relative to the cost of training a large neural network.

%% file: results.tex
\section{Computational Results and Discussion} \label{sec:results}

Here we use multilinear operators, both linear and quadratic, to approximate smoke concentration fields in the Upper Rio Grande Watershed~\cite{remy2024restoring} and to determine areas where smoke crosses a critical threshold, a task not previously attempted in the literature. Importantly, we are only verifying our method against a trusted emulator (WRF-Fire) and not calibrating a model probabilistically against real observations~\cite{jakeman2025verification}. Slight differences in plume location or plume height (including spurious plumes) will have an outsize effect on $\ell^2$ error for our use-case. As a result, we consider classification accuracy (\Cref{sec:classification}) primarily and least-squares error \cref{equ:lstsq} secondarily. All reported runtimes are on a 2023 Macbook Pro with 32GB RAM and an Apple M2 Max processor.

The first step of any analysis, whether utilizing a surrogate model or not, is to generate high-fidelity data (\Cref{sec:setup}). In \Cref{sec:monte-carlo}, we establish that building a surrogate model is actually worth the effort, by demonstrating reduced computational cost for a given accuracy threshold in a proxy for mean aerosol optical depth. In \cref{sec:classification}, we use our method to identify specific areas where smoke is present, attaining compelling classification accuracy measured by area-under-the-curve and intersection-over-union. In \Cref{sec:hyperparameter-sensitivity}, we show that, for the classification QoI, our method is mostly insensitive to the particular basis truncation and smoke thresholds except at the extremes, i.e. including almost all smoke or almost all variance. We quantify the performance of our method when the training and testing distributions are different in \cref{sec:distributional-shift}. We consider an alternative method of learning a coefficient-to-coefficient map in \cref{sec:gp}. We discuss implications, limitations, possible extensions, and other baselines in \cref{sec:discussion}.

\subsection{High-Fidelity Data Generation}
\label{sec:setup}

\subsubsection{Fuel Succession} 
LANDIS-II simulations from \cite{remy2024restoring}, \hl{spanning 20--50 years into the future}, provide the fuel distribution and ignition point that WRF-Fire takes as inputs. For mountainous areas within the Upper Rio Grande Watershed, we translate the LANDIS-II fuels to the 40 Scott and Burgan categories~\cite{scott2005standard}; to fill lower elevation gaps, we use fuel layers from the publicly available LANDFIRE dataset~\cite{landfire2022}. Each LANDIS-provided ignition point is run with three sets of weather conditions in WRF-Fire (low, medium, and high drought). We use the U.S. Drought Monitor for the period of 2010--2025 to identify three summer periods in which the extreme drought condition was under 25\%, between 50\% and 75\%, and greater than 75\% of the Western United States. 

\subsubsection{WRF-Fire}

As inputs, WRF-Fire ingests fuels and ignition points from LANDIS-II along with weather data, and provides burn area and smoke as outputs. For each \hl{LANDIS year}, we specify the boundary conditions of~\cref{equ:euler} using weather sets from the European Center for Medium-Range Weather Forecasts' fifth-generation atmospheric reanalysis of the global climate~\cite{hersbach2020era5} at approximately 30 km grid-spacing. We additionally alter the surface-fuel moisture percentage within WRF-Fire for each weather set: 0\%, 5\%, and 10\% for the high, medium, and low drought scenarios, respectively. We use the WRF Preprocessing System~\cite{wps46} to generate input files for WRF-Fire from weather datasets. 

For the numerical discretization, we increase resolution in WRF-Fire by a factor of 10 through two nested domains, first to a 3 km inner domain, then to the fire grid with 300 m spacing, matching the resolution of the LANDIS-II input fuel layers. The outer and inner domains have 58 vertical layers with a top height of 20 km above sea level. The inner atmospheric domain has size $100 \times 80$, and fire grids are $1000 \times 800$. Fires burn at the fire grid resolution, while plumes are generated at the inner domain resolution. To maintain the CFL condition, we use $\Delta t = 120$s timestep for the outer domain $\Delta t = 40$s the inner domain and fire grid. Lastly, WRF-Chem propagates the smoke concentration according to \Cref{sec:wrf-chem}. Checkpoints are made every six hours of simulation time. \hl{Thus, the temporal resolution of our fire and smoke data is six hours, although the fuel succession model simulates between 20 and 50 years.}

\subsubsection{Datasets}
Since our focus is on large wildfires, we first filter out all burn areas smaller than $10^5$ acres, a heuristic cutoff slightly smaller than Albuquerque, NM. This leaves $M = 7339$ WRF-Fire snapshots, each of size $300~\text{km} \times 240~\text{km}$ with $\Delta x=\Delta y=3$ km. On average, each high-fidelity snapshot takes 684 seconds on a high-performance computing cluster using 20 cores of an Intel Sapphire Rapids processor with 256GB memory; although separate fires can be run in parallel, fire generation takes nearly 14,000 core-hours. We adopt a 45/10/45 split for training, validation, and test datasets. We use the validation set to tune hyperparameters (\Cref{sec:hyperparameter-sensitivity}) and document PCA generalization error. To prevent data leakage, all snapshots from a given fire are in only one dataset. We include fires from all drought conditions in the training set, except to study distributional shift in \cref{sec:distributional-shift}. 

\hl{We select the inputs so that the many-query online stage requires only an inexpensive fire simulator, such as LANDIS-II or FARSITE, instead of an expensive coupled code like WRF-Fire.} The inputs $f^{(m)}$ are time-since-ignition profiles; the outputs $g^{(m)}$ are average smoke concentrations over the vertical air column. Topography, weather, and other WRF inputs appear indirectly in the time-since-ignition and smoke concentration profiles but are {\em not} directly supplied as training data. \hl{We re-emphasize that our proposed surrogate model can be applied in principle to any desired input or output variables.}

\subsection{QoI 1: Aerosol Optical Depth}
\label{sec:monte-carlo}

Aerosol optical depth quantifies the haziness of an air column. AOD is defined as the logarithm of the ratio of received radiation to transmitted radiation. The precise relationship between smoke and transmitted radiation is beyond the scope of the present work, so we use the following QoI as a proxy:
\begin{equation}
    Q(x,y) = \mathbb{E}_g\left[ \ln(g(x,y)+1) \right] \, .
    \label{equ:qoi}
\end{equation}
We take the expectation over all {\em final-time} cumulative smoke outputs $g$ in the holdout set. Even though $g$ represents cumulative rather than instantaneous smoke, this proxy is still an informative metric of haziness over a long time-window. The inner argument, $\ln(g+1)$, captures perfect transmission in the absence of smoke ($0$ when $g \equiv 0$) and increases as $g$ increases.

To approximate $Q$, we consider three approaches. In the first, we use Monte Carlo sampling of the full state:
$$
\overline{Q}_M(x,y) = \frac{1}{M}\sum_{m=1}^M \ln (g^{(m)}(x,y)+1)
$$
In the second, we directly learn the map $f \mapsto \mathscr{A}_{\vec{\theta}}[f] = \ln(g+1)$ with a multilinear operator. As part of this process, we apply PCA to the fire snapshots $\{f^{(m)} \}$ and QoI snapshots $\{ \ln(g^{(m)}+1) \}$, to get reduced basis matrices $\vec{U}_r \in \R^{N \times r}$ and $\vec{\tilde{U}}_{\tilde{r}} \in \R^{N \times \tilde{r}}$ respectively. As we will soon show, multilinear operators are highly inexpensive to train, so it is perfectly feasible to build a surrogate tailored to a particular QoI. The approximation of $Q$ in this case is
$$
\widehat{Q}_M(x,y) = \frac{1}{M} \sum_{m=1}^{M} \mathscr{\hat A}_{\vec{\theta}}[f^{(m)}](x,y)
$$
Since $\mathscr{\hat{A}}_{\vec{\theta}}$ depends on the reduced basis matrices $\vec{U}_r$ and $\vec{\tilde{U}}_{\tilde{r}}$, the third case we consider is the reduced-dimensional Monte Carlo estimator, defined on the spatial grid for notational convenience:
$$
\vec{\widetilde Q}_M = \frac{1}{M} \sum_{m=1}^M \vec{\tilde{U}}_{\tilde{r}} \, \vec{\tilde{U}}_{\tilde{r}}^\top \, \ln(\vec{g}^{(m)}+1) \, .
$$

\begin{figure}
    \centering
    \includegraphics[width=0.5\textwidth]{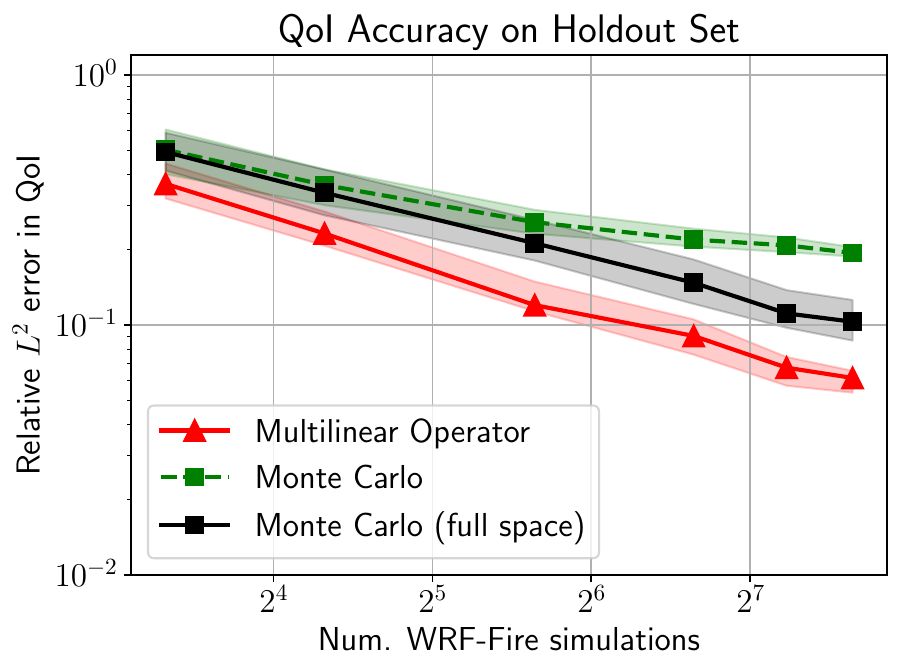}
    \caption{Three approaches for computing the quantity of interest~\cref{equ:qoi}. The shaded regions represent the 25\%--75\% range.}
    \label{fig:qoi}
\end{figure}

We show the results in \cref{fig:qoi}. We select $r$ and $\tilde{r}$ to capture 95\% of the snapshot standard deviation. Each Monte Carlo sample is a single high-fidelity model evaluation from the holdout set. In contrast, the multilinear operator is trained on a small number of training data and then applied to {\em all} members of the holdout set. For a given accuracy level, full-dimensional Monte Carlo sampling requires at least double the number of samples of a multilinear operator. \hl{Obtaining $10\%$ relative error requires roughly 100 more samples with Monte Carlo estimation than with the multilinear operator, representing 19 additional node-hours of computation.} Comparing reduced-dimensional Monte Carlo sampling to the multilinear operator isolates the projection error in~\cref{equ:pca_basis} from the least-squares error in~\cref{equ:lstsq}. In that case, reduced-dimensional Monte Carlo requires more than four times as many high-fidelity simulations to obtain the same accuracy as a multilinear operator. Relative error of $O(10^{-2})$ is acceptable because we expect the modeling error in empirical relationships between AOD and true power-output reduction to be larger than $O(10^{-2})$.

\subsection{QoI 2: Smoke Footprint}
\label{sec:classification}

Here, we study the spatial extent of a smoke plume using linear and quadratic bases for operators. In contrast to \cref{sec:monte-carlo}, we use fire and smoke snapshots for all observational times, not just those corresponding to the final state. Computing the SVDs~\cref{equ:svd} takes 24 seconds total. In both the linear and quadratic cases, we take $\tilde{r}$ to capture 95\% of the standard deviation in the training output snapshots $\vec{\tilde{S}}_\text{train}$, yielding $\tilde{r} = 273$.  This choice of $\tilde{r}$ yields 9\% relative error on validation outputs $\vec{\tilde{S}}_\text{val}$, and we examine other thresholds in \cref{sec:hyperparameter-sensitivity}. For convenience, we summarize the results of \crefrange{sec:linear-operator}{sec:gp} in \cref{tbl:classification-summary}.

\begin{table}
    \centering
    \begin{tabular}{|l|c|c|c|c|c|c|}
    \hline
    & Encoding & Training & Tuning & Hyperparams. & AUC & IoU \\ \hline 
    Linear operator & 24 sec & 0.02 sec & 146 sec & $r$, $\tilde{r}$, $\alpha$ & 0.93 & 0.61 \\ \hline 
    Quadratic operator & 24 sec & 29 sec & 436 sec & $r$, $\tilde{r}$, $\alpha$, $\lambda$ & 0.95 & 0.64 \\ \hline
    Gaussian Process (images) & --- & 0.56 sec & 70 sec & $l$, $\alpha$ & 0.83 & 0.45 \\ \hline
    Gaussian Process (PCA) & 24 sec & 0.07 sec & 147 sec & $l$, $r$, $\tilde{r}$, $\alpha$ & 0.68 & 0.35 \\ \hline
    \end{tabular}
    \caption{Summary of encoding, training, and hyperparameter tuning times for smoke footprint QoI, along with median classification accuracy. For reference, data generation dwarfs these times at 14,000 core-hours.}
    \label{tbl:classification-summary}
\end{table}

\subsubsection{Metrics}
\label{sec:metrics}

We consider the receiver operating characteristic (ROC), which is the curve in $\R^2$ traced by
$$
\text{ROC} = \left\{ (\text{FPR}(\alpha), \, \text{TPR}(\alpha)) : \alpha \geq 0 \right\}, \qquad \text{FPR} = \frac{\text{FP}}{\text{FP}+\text{TN}}, \qquad \text{TPR} = \frac{\text{TP}}{\text{TP} + \text{FN}},
$$
where $\alpha$ is a smoke concentration threshold value {\em in the model predictions}. Area under the curve (AUC) is the integral of $\text{TPR}(\alpha)$ as a function of $\text{FPR}(\alpha)$. A higher AUC indicates a more accurate classifier. At the threshold corresponding to the ROC nearest the point $(0,1)$, we compute the intersection over union (IoU), defined as
\begin{equation}
\text{IoU} = \frac{\text{TP}}{\text{TP} + \text{FP} + \text{FN}} \ .
\label{equ:iou}
\end{equation}
However, before computing these quantities, we must first define the presence of smoke {\em in the snapshot data}. We determine a threshold $\tau > 0$ from the cumulative distribution of total smoke at some cutoff level $\beta$:
$$
\tau = \argmax_{\tilde{\tau} > 0} \left( \frac{1}{M_{\text{val}}} \ \sum_{j=1}^{M_{\text{val}}} \ \frac{\mathds{1} \left\{ \left( \vec{\tilde S}_\text{val} \right)_{:, j} > \tilde{\tau} \right\}}{\mathds{1} \left\{ \left( \vec{\tilde S}_\text{val} \right)_{:,j} > 0 \right\}} > \beta \right) \,
$$
where $\mathds{1}$ is the indicator function applied componentwise. With $\beta=0.95$, we find $\tau = 0.158 \ \mu\text{g/m}^3$; that is, $\tau$ captures 95\% of the smoke in a snapshot, on average.

\subsubsection{Linear Operator}
\label{sec:linear-operator}

For the linear operator, we choose $r$ to capture 95\% of the standard deviation in $\vec{S}_{\text{train}}$, yielding $r=686$. This choice of $r$ yields 17\% relative error on validation inputs $\vec{S}_\text{val}$. There are $r\tilde{r} \approx 1.9 \times 10^5$ parameters total. Explicitly forming and solving the Gramian system~\cref{equ:gramian} takes 0.72 seconds, while computing $\vec{\theta}$ with \cref{thm:lstsq_pca_linear} takes 0.02 seconds, a speedup of two orders of magnitude. The relative $\ell^2$ training error is 13\%. To get model predictions, we first compute $\vec{U}_r^\top \vec{S}_\text{test}$, apply $\vec{\theta}$, and then lift back to the full-dimensional space with $\vec{\tilde{U}}_{\tilde{r}}$:
$$
\vec{Y}_\text{pred} = \max \left( \vec{\tilde{U}}_{\tilde{r}} \, \vec{\theta} \, \vec{U}_r^\top \vec{S}_\text{test}, \, \vec{0} \right),
$$
which takes 0.18 seconds, less than 0.05 ms per input. The relative test error in the Frobenius norm is
$$
\frac{\| \vec{Y}_\text{pred} - \vec{\tilde S}_\text{test} \|_{\text{F}}}{\| \vec{\tilde S}_\text{test} \|_{\text{F}}} = 29\%,
$$
which is large, but mostly driven by small spurious smoke plumes or overestimation of true smoke plume concentration. To justify this claim, we quantify the classification accuracy of $\vec{Y}^\text{pred}$ following \cref{sec:classification}. \Cref{fig:linear-classification} summarizes the classification performance on the test set (AUC = 0.93, IoU = 0.61), and \Cref{fig:linear-predictions} shows model test predictions versus observations at different IoU percentiles. The predictions in \cref{fig:linear-predictions} are visually quite similar, but some predictions do exhibit spurious plumes or slightly elevated concentrations compared to the observational data. The colorbars hint at a possible correlation between smoke footprint size and IoU, which we show in \Cref{fig:iou-vs-smoke}, \hl{suggesting that classification is more accurate on large smoke plumes}.

\begin{figure}
    \centering
    \includegraphics[width=0.99\textwidth]{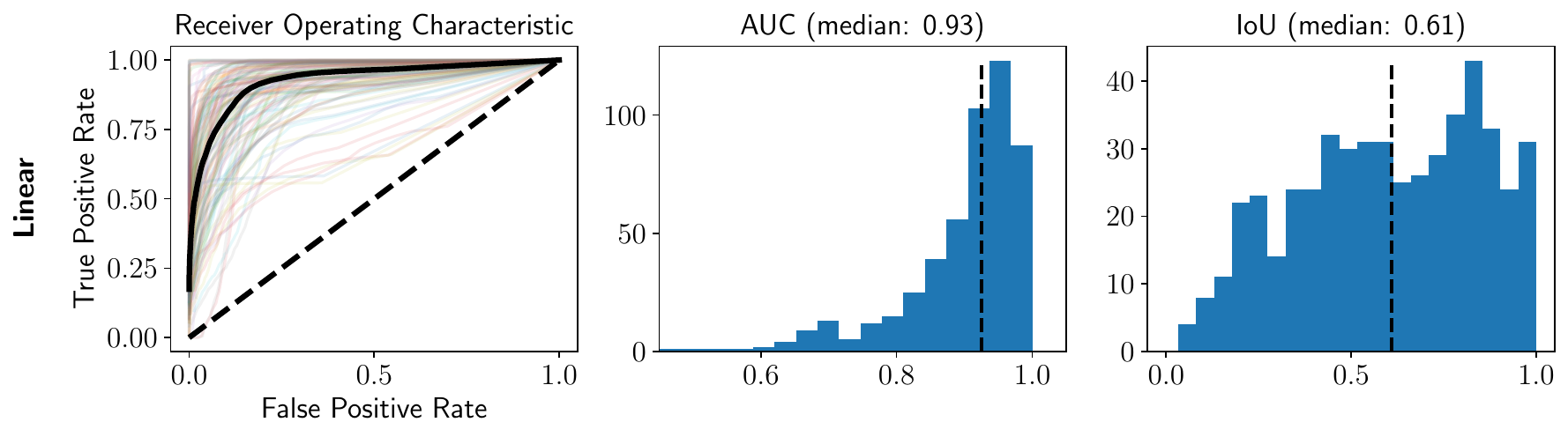}
    \caption{Classification performance of linear model. The solid black line is the pointwise median ROC. To reduce clutter, only a subset of ensemble members are shown in the left panel.}
    \label{fig:linear-classification}
\end{figure}

\begin{figure}
    \centering
    \includegraphics[width=0.99\textwidth]{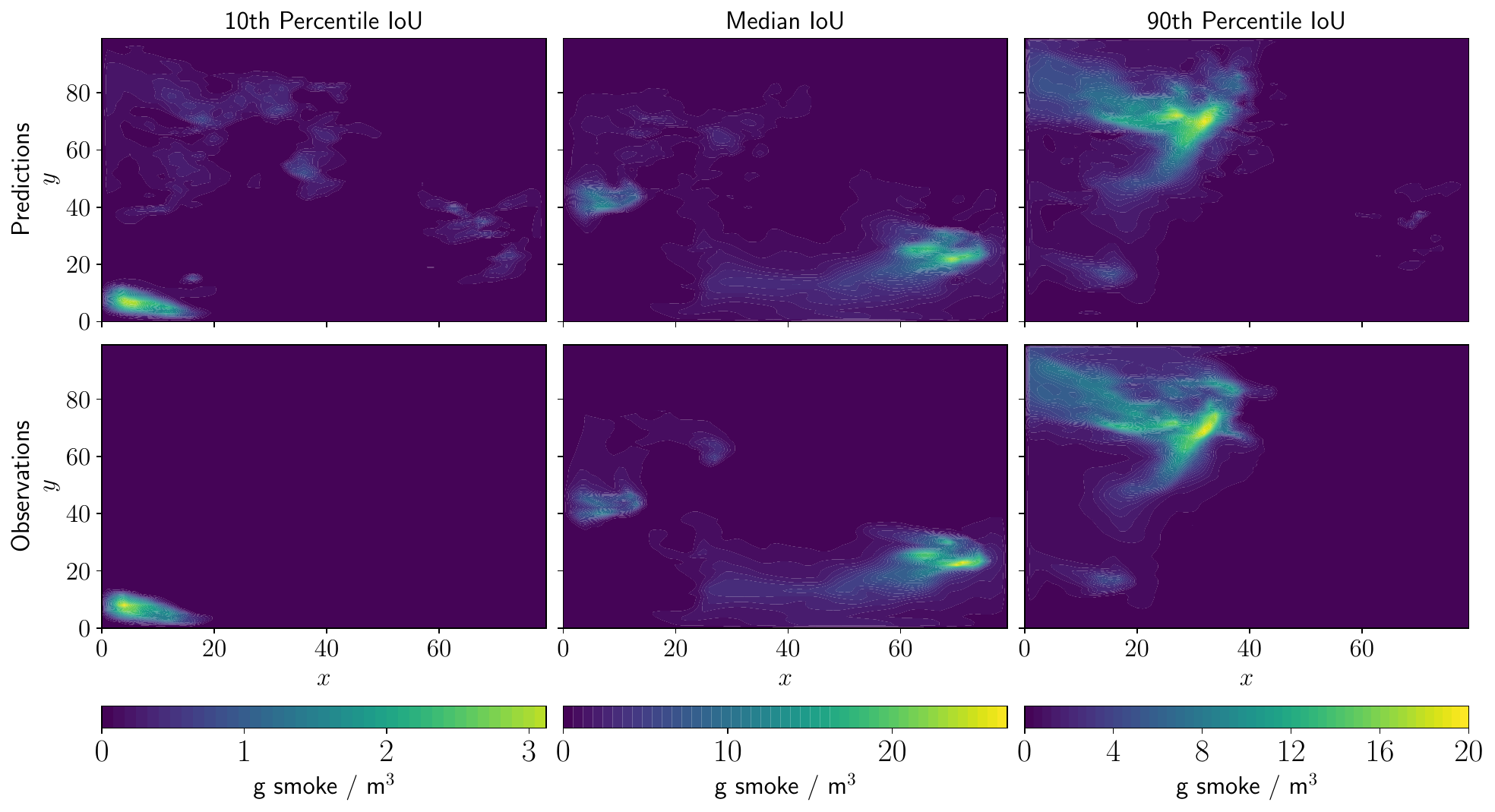}
    \caption{Linear model predictions and observations at different IoU percentiles. The rows in a given column share the same colormap.}
    \label{fig:linear-predictions}
\end{figure}

\begin{figure}
    \centering
    \includegraphics[width=0.33\textwidth]{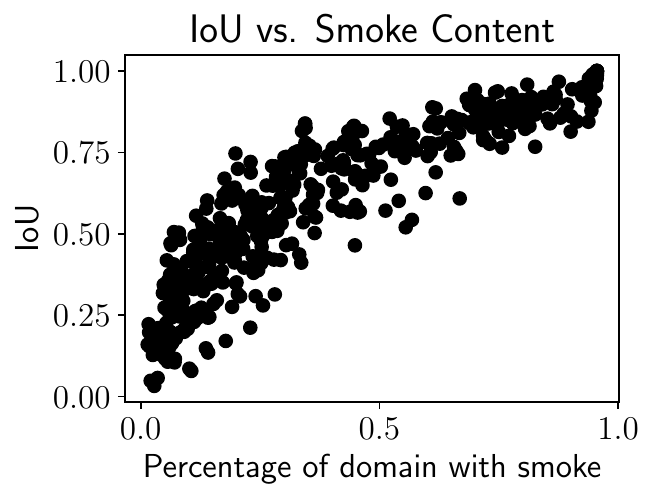}
    \caption{Correlation of IoU with smoke footprint extent.}
    \label{fig:iou-vs-smoke}
\end{figure}

\subsubsection{Quadratic Operator}
\label{sec:quadratic-operator}

In the quadratic case, we choose $r=200$ since the column dimension of $\vec{M}$ is $O(r^2)$. $\vec{M}$ has 20300 columns, so there are approximately $1.1 \times 10^7$ trainable parameters total. The projection error on the validation inputs is 24\%. Forming $\vec{M}_\text{train}$ takes 6 seconds. Solving~\cref{equ:lstsq_pca_quadratic} with $\vec{M} = \vec{M}_\text{train}$ takes 23 seconds. Forming and solving the least-squares system is significantly more time-consuming than the linear case, which takes 0.02 seconds. We use $\lambda=10^5$, which we determined with hyperparameter tuning (see \Cref{fig:hyperparameter-sensitivity}). Computing
$$
\vec{Y}_\text{pred} = \max \left( \vec{\tilde{U}}_{\tilde{r}} \, \vec{\theta} \, \vec{M}_\text{test}, \, \vec{0} \right)
$$
takes 0.48 seconds, which is less than 0.15 ms per input. The median relative $\ell^2$ error is 40\%. As before, \Cref{fig:quadratic-classification} summarizes holdout-set classification performance (AUC = 0.95, IoU = 0.65), which is slightly higher than the linear case. \Cref{fig:quadratic-predictions} shows model predictions versus observations at different IoU percentiles, which are visually quite similar.

\begin{figure}
    \centering
    \includegraphics[width=0.99\textwidth]{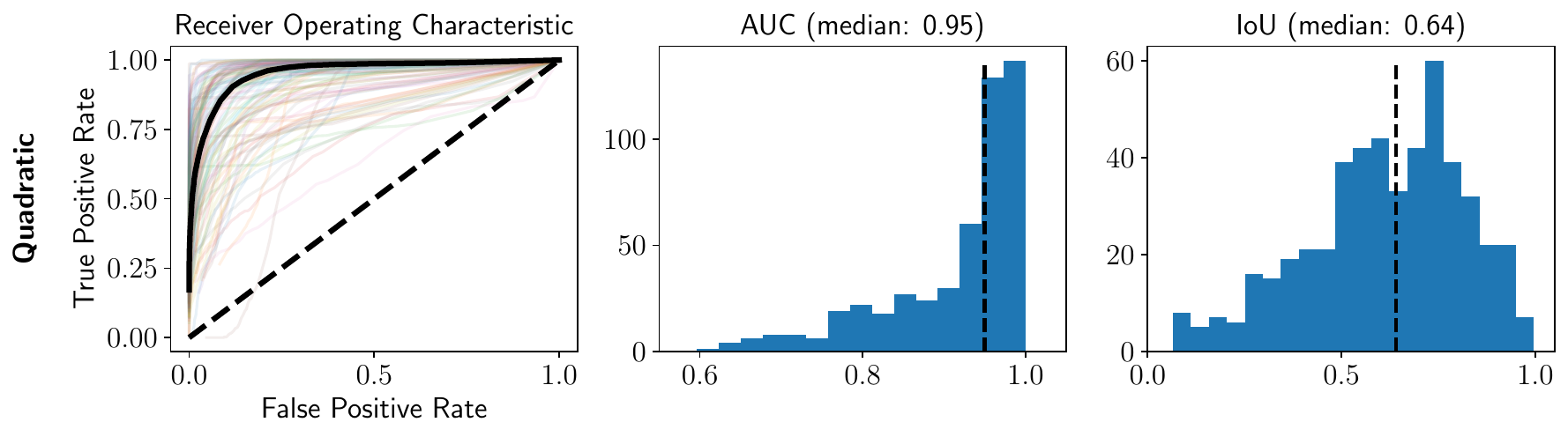}
    \caption{Classification performance of quadratic model. The solid black line is the pointwise median ROC. To reduce clutter, only a subset of ensemble members are shown in the left panel.}
    \label{fig:quadratic-classification}
\end{figure}

\begin{figure}
    \centering
    \includegraphics[width=0.99\textwidth]{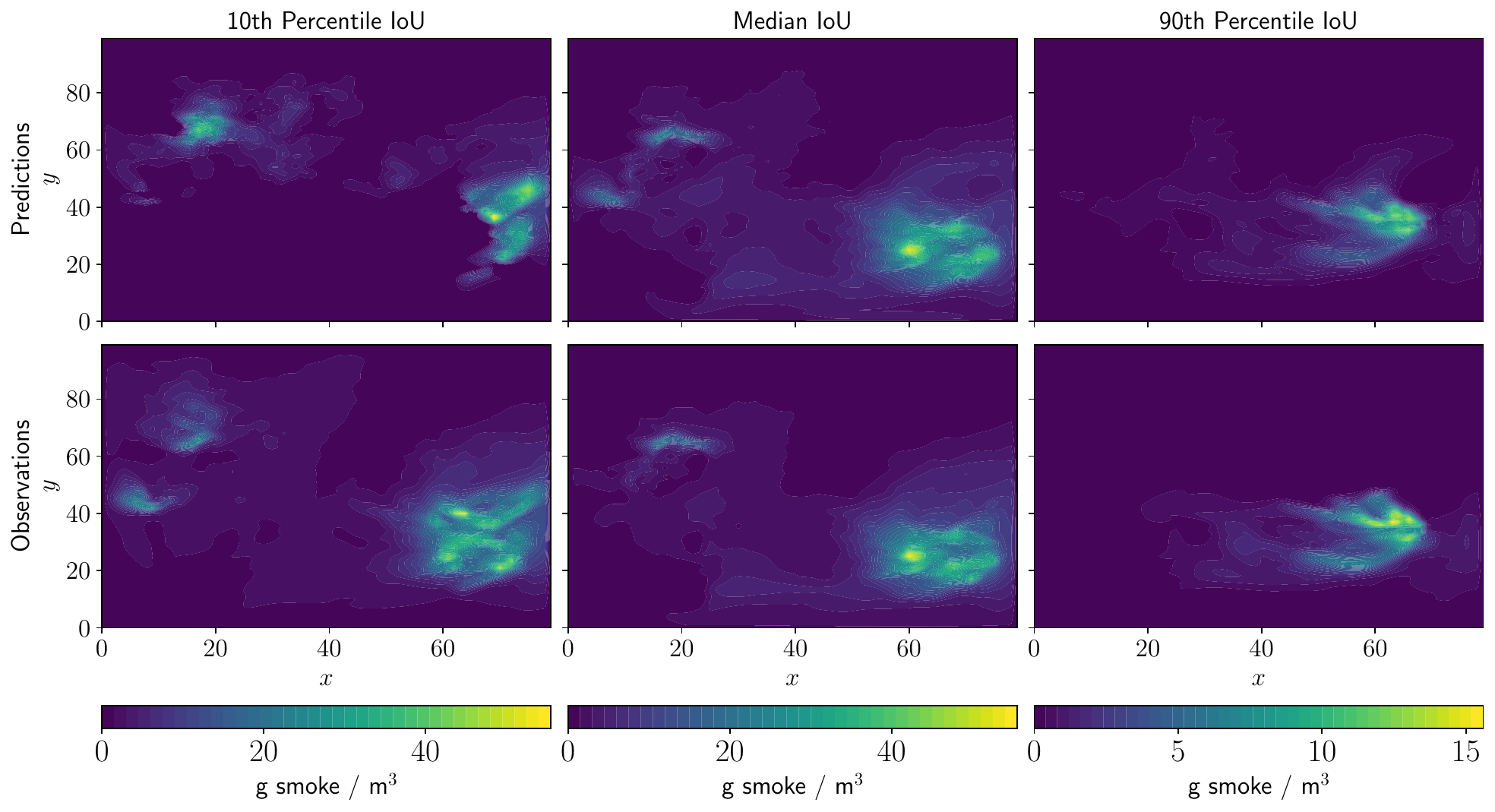}
    \caption{Quadratic model predictions and observations at different accuracy percentiles. The rows in a given column share the same colormap.}
    \label{fig:quadratic-predictions}
\end{figure}

\subsubsection{Hyperparameter Sensitivity}
\label{sec:hyperparameter-sensitivity}

Here, we investigate the impact of truncation levels on ultimate model accuracy. In \crefrange{sec:linear-operator}{sec:quadratic-operator}, we used 95\% of the snapshot standard deviation, 95\% of the total smoke, and $\lambda=10^5$. Now we vary each of these hyperparameters to capture differing amounts of variance and smoke. We show the resulting classification accuracy in \cref{fig:hyperparameter-sensitivity} on the validation set. As the ratio of captured smoke increases, there is a slight decrease in AUC since more smoke locations than the tallest peaks get included. Accuracy degrades slightly as more variance is retained in the PCA expansion, which leads to the multilinear operator learning possibly spurious relationships in the training set. The regularization parameter $\lambda$ has minimal impact.

\begin{figure}
    \centering
    \includegraphics[width=0.99\textwidth]{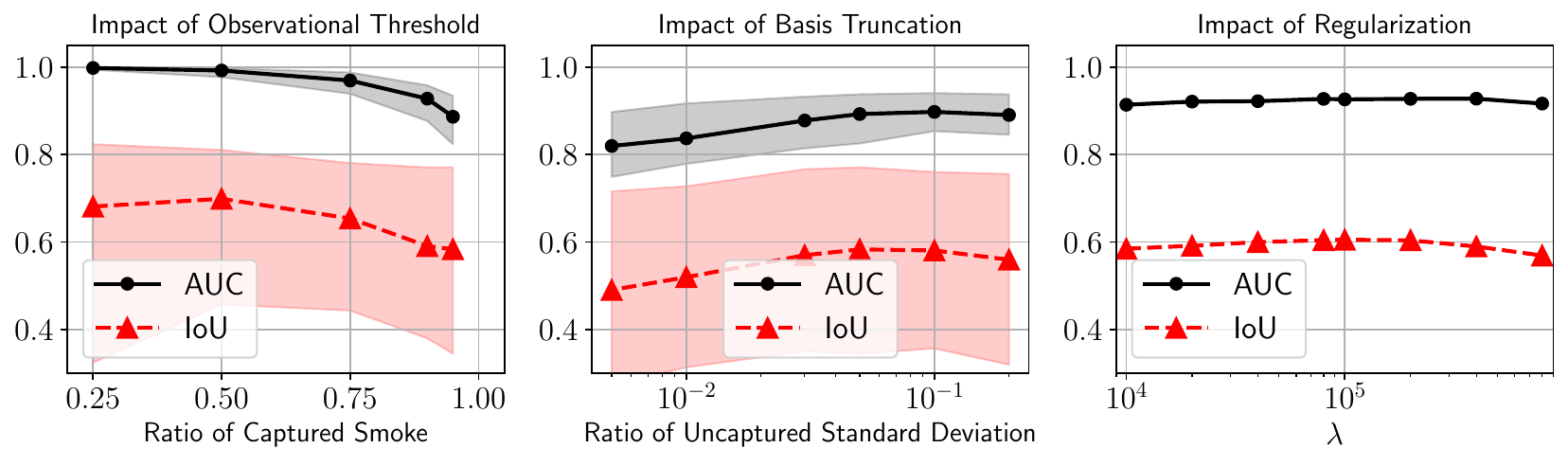}
    \caption{Left: Changing observational threshold $\alpha$ (linear operator). Center: Changing PCA accuracy (linear operator). Right: Changing regularization (quadratic operator). All quantities are computed on the validation set, not the holdout set.}
    \label{fig:hyperparameter-sensitivity}
\end{figure}

\subsubsection{Distributional Shift}
\label{sec:distributional-shift}

An important consideration in environmental modeling is how a model will perform under distributional shift, when conditions that produced training data are no longer satisfied at inference time. To investigate this, we consider low, medium, and high drought conditions, corresponding to different weather and fuel moisture fractions within WRF-Fire. We train a linear operator on only medium-drought conditions and perform classification under low- and high-drought conditions, where the smoke threshold still captures $\beta=95\%$ of total smoke. \Cref{fig:distributional-shift} shows that distributional shift degrades, but does not destroy, classification accuracy. Unsurprisingly, the median and dispersion of AUC and IoU are now worse than with in-distribution prediction, where the median AUC and IoU are 0.93 and 0.61, respectively. However, median AUC is still above 0.5 for each case; AUC for low-drought prediction is 0.63. IoU is approximately 0.3 in each case, which is acceptable but unremarkable performance for a classifier; we discuss this further in \cref{sec:discussion}. The peaks at 1.0 correspond to the case where there is neither observational nor predicted smoke, since the observational threshold is determined from out-of-distribution data.

\begin{figure}
    \centering
    \includegraphics[width=0.99\textwidth]{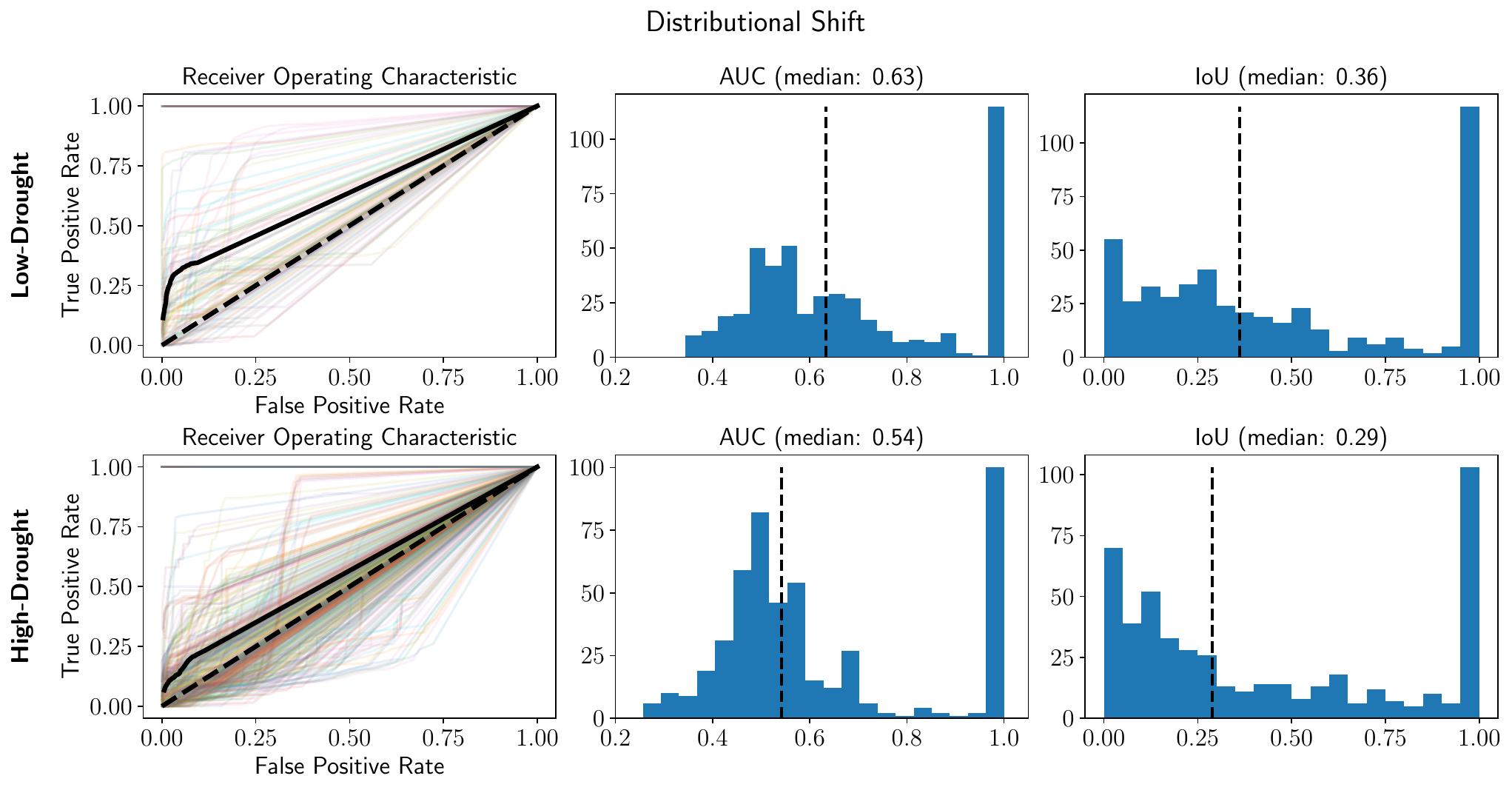}
    \caption{Classification accuracy on different distributions than were seen in training.}
    \label{fig:distributional-shift}
\end{figure}

\subsubsection{Gaussian Processes}
\label{sec:gp}
We now benchmark against a concentration field constructed with a Gaussian Process (GP). In this setting, we do not perform any basis transformation and instead assume a fixed discretization, learning the map directly between images. We then apply the same analysis as before to quantify classification performance.

GPs are standard and highly popular regression techniques. For a function sampled from a GP, any finite number of point-evaluations is normally distributed with mean and covariance given by mean and kernel functions, respectively~\cite{rasmussen2006gaussian}. We use squared-exponential Mat\'{e}rn kernel with length-scale $l$ and variance $\sigma^2$:
$$
k(\vec{a}, \vec{b}; \, l) = \exp\left( - \frac{\|\vec{a} - \vec{b}\|_2^2}{2l^2} \right) \, .
$$
where $l$ will be optimized with L-BFGS. We use a GP to learn the map $h : \R^{8000} \to \R^{8000}$, representing flattened $100 \times 80$ images of time-since-ignition and smoke concentration. We also consider a GP that maps input coefficients ($r=686$) to output coefficients ($\tilde{r} = 273$). To maintain parity with the training times of multilinear operators, we randomly subsample 100 input--output pairs for training. \Cref{fig:gp-classification} displays the results. For fixed image sizes, there is a median AUC of 0.83 and a median IoU of 0.45. Even though there is significantly less data, training is still 25 times slower than with the linear model, although the tuning process is faster as there are fewer hyperparameters. For PCA coefficients, the AUC and IoU are lower, at 0.68 and 0.35 respectively; since the input and output dimensions are much smaller, tuning $l$ takes only 0.5 seconds here. In either case, classification accuracy is lower than either the linear or quadratic operator, which have a median AUC and IoU of at least 0.93 and 0.61, respectively.

\begin{figure}
    \centering
    \includegraphics[width=0.99\textwidth]{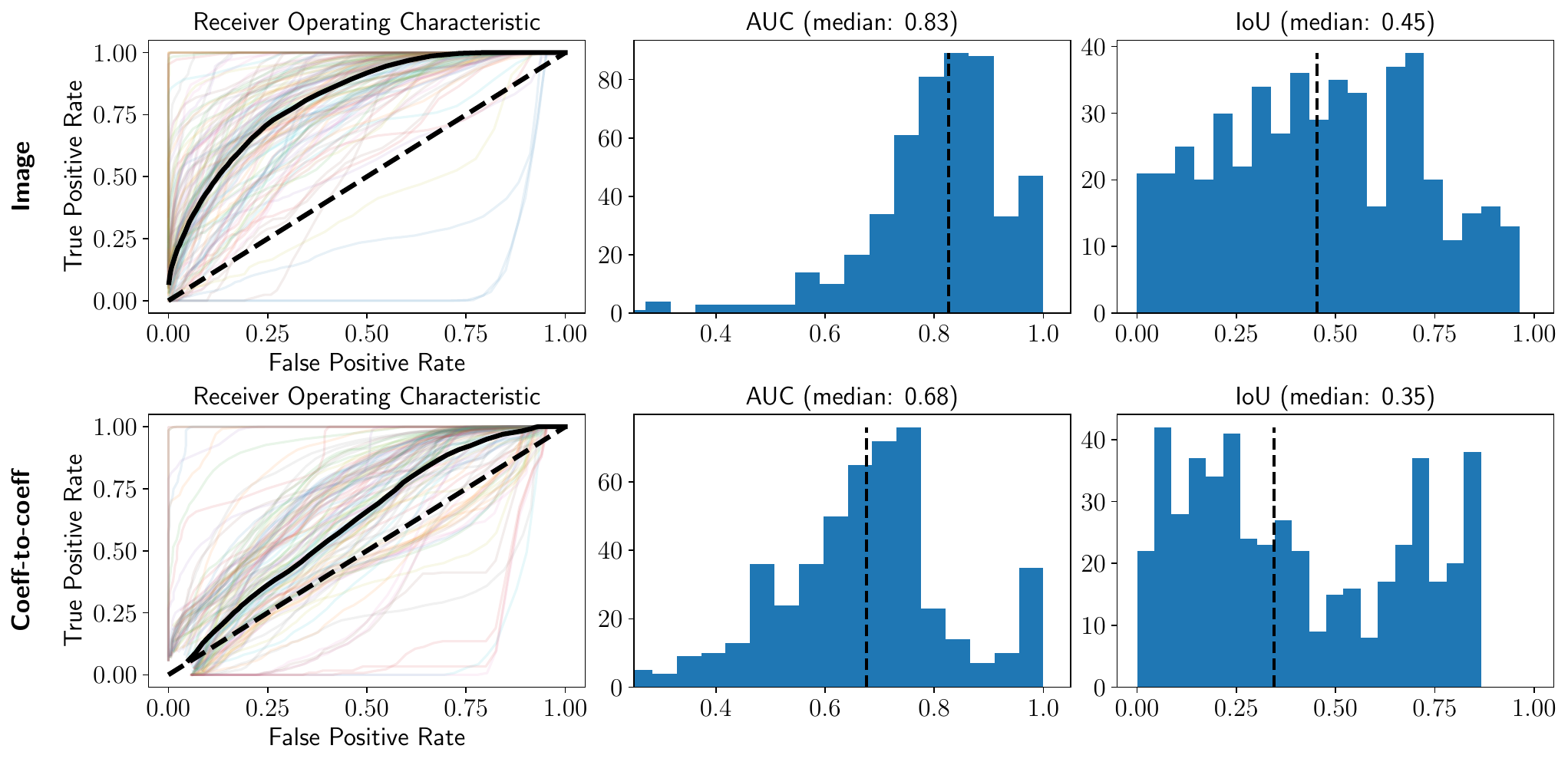}
    \caption{Classification performance using a GP to map between input and output images of fixed size (top) and input--output coefficients (bottom).} 
    \label{fig:gp-classification}
\end{figure}

\subsection{Discussion}
\label{sec:discussion}

Our results indicate that multilinear operators are effective surrogate models for fire-to-smoke maps, positioning them as a promising tool for many-query outer-loop analysis. The computational cost of our method, expressed as runtime, is also quite low, as shown in \Cref{tbl:classification-summary}. \Crefrange{fig:linear-classification}{fig:quadratic-predictions} show that median AUC is at least 0.9 for both the linear and quadratic models, indicating highly effective classifiers. This suggests that sum-of-squares error is overly pessimistic for determining the presence of smoke above some critical value. Further evidence is the close visual similarity of the prediction plots in \Cref{fig:linear-predictions} and \Cref{fig:quadratic-predictions}. However, \Cref{fig:hyperparameter-sensitivity} does indicate better performance as less smoke is captured, i.e. as the footprint excludes smaller plumes, which is also consistent with the correlation in \Cref{fig:iou-vs-smoke}. Although hyperparameter tuning is costly, \cref{fig:hyperparameter-sensitivity} shows little benefit gained from costly tuning efforts.

\cref{sec:distributional-shift} indicates that our method suffers under distributional shift, which causes it to perform as an essentially uninformative classifier. We believe this is because, as drought conditions increase, the fuel becomes dryer and emits proportionally more smoke, but our method does not specifically incorporate fuel moisture content as an input (for speed of training), which is a limitation we intend to address in future work. \hl{However, the purpose of the current study is to construct a surrogate fire-to-smoke map for tractable Monte Carlo smoke analysis, not to perform causal attribution of model performance on each possible physical driver.} Even with distributional shift, the IoU of our method is still superior to the state-of-the-art. Furthermore, the weather and fuel-management assumptions in our outer-loop analysis will be prescribed up front, so we will be able to obtain training data that covers the full range of scenarios.

\subsubsection{Comparison with Baselines}
\label{sec:baselines}

Previous work on spatially dependent smoke detection~\cite{larsen2021deep} claims an ``IoU'' of 57\% but uses a nonstandard definition of IoU. The authors treat the absence of smoke as a ``true'' signal and then average with the usual IoU \cref{equ:iou}. This approach results in overly optimistic model predictions and obfuscates the point of IoU: to determine classification accuracy when the signal is far less common than background noise. Indeed, their Supplemental Material shows the relevant IoU \cref{equ:iou} as 0.15, in Figure S3. For timings, the most similar prior work~\cite{larsen2021deep} does not provide training time, although a ML-based wildfire forecasting model \cite{lahrichi2025improved} reported average training and inference times of 1.25 hours and 7.7 ms, respectively, on a GPU. Furthermore, a GP results in lower performance than either a linear or quadratic multilinear operator, as shown in \Cref{tbl:classification-summary} and \Cref{fig:gp-classification}, and tuning $l$ can require significant time even with a training set six times smaller.

%% file: conclusion.tex
\section{Conclusion and Future Work} \label{sec:conclusion}

We have presented a method for approximating fire-to-smoke maps with highly efficient operators. We use PCA to compute a representation of the inputs (time-since-ignition) and outputs (mean vertical smoke concentration). We then map from inputs to outputs using a linear combination of powers of the PCA input coefficients, cf. \cite{turnage2025optimal}. In the linear case, \Cref{thm:lstsq_pca_linear} provides a closed-form solution for the operator, which can be computed at negligible cost after two SVDs. In terms of AUC and IoU, our operators perform substantially better than the previous state-of-the-art. The reduced cost of this method, as well as the enhanced accuracy demonstrated in \Crefrange{fig:linear-classification}{fig:quadratic-predictions}, indicates that it can be successfully applied to outer-loop analysis of fuel succession for the Upper Rio Grande Watershed. Fire-to-smoke operators for other areas can be built with site-specific training data. \Cref{sec:results} contains all hyperparameter settings required for reproducibility, while \Cref{sec:appendix} ensures conformity with best practices for trustworthy scientific machine learning.

In future work, we will train multilinear operators in an offline phase in order to quantify smoke impacts quickly in an online phase with Monte Carlo sampling. We will run many LANDIS-II simulations in parallel with various weather conditions, fuel-treatment regimes, and time horizons. LANDIS-II also has a simple fire model that quickly provides a low-fidelity time-since-ignition field. We will then use a multilinear operator, trained on high-fidelity smoke transport, to map LANDIS fires to smoke concentration. In this setting, different weather may have a significant impact on model performance (\cref{sec:distributional-shift}), requiring probabilistic model calibration against observations. \hl{The final step will leverage empirical relationships between smoke concentration, aerosol optical depth, and solar power production to quantify energy output under various fuel treatment strategies.}

Additionally, this surrogate model outputs the passive total smoke concentration, though specific emissions may be more important in other use-cases. Characterizing wildfire smoke as its constituent elements requires understanding of (i) what is burned, (ii) the amount, rate, and behavior of consumption, and (iii) how microphysics and chemistry within the plume alter emissions over time. Our methodology is capable of developing separate surrogates for each emission, e.g. from a well-parameterized WRF-Chem model. Ongoing work in the Fire and Smoke Model Evaluation Experiment \cite{prichard2019fire} would assist in generating these emission parameters.

%% file: appendices.tex
\appendix

\definecolor{darkgreen}{RGB}{80,200,120}
\newcommand{\greencheck}{{\bf \color{darkgreen} \checkmark}}
\newcommand{\redx}{{\bf \color{red} $\bm \times$}}

\section{Scientific Machine Learning Best Practices}
\label[app]{sec:appendix}

Here, we specifically document sixteen recommendations for trustworthy scientific machine learning~\cite[p.~2]{jakeman2025verification}.

\begin{enumerate}
    \itemsep0em
    \item Problem Definition
    \begin{enumerate}
        \itemsep0em
        \item Specify prior knowledge and model purpose (\greencheck, \cref{sec:intro})
        \item Specify verification, calibration, validation, and application domains (\greencheck, \cref{sec:results})
        \item Carefully select and specify quantities of interest (\greencheck, \crefrange{sec:monte-carlo}{sec:classification})
        \item Select and document model structure (\greencheck, \cref{sec:formulation})
    \end{enumerate}
    \item Verification
    \begin{enumerate}
        \itemsep0em
        \item Verify code implementation with idealized test problems ($\greencheck$, \cite{jakeman2023pyapprox})
        \item Verify solution accuracy with realistic benchmarks ($\greencheck$, \cref{fig:linear-classification}, \cref{fig:linear-predictions}, \cref{fig:quadratic-classification}, \cref{fig:quadratic-predictions}, and \cite{turnage2025optimal})
    \end{enumerate}
    \item Validation
    \begin{enumerate}
        \itemsep0em
        \item Perform probabilistic calibration (N/A, no inverse problem to solve)
        \item Validate model against purpose-specific requirements (\greencheck, \crefrange{sec:linear-operator}{sec:quadratic-operator})
        \item Quantify prediction uncertainties (\greencheck, \cref{fig:linear-classification}, \cref{fig:quadratic-classification})
    \end{enumerate}
    \item Continuous Credibility Building
    \begin{enumerate}
        \itemsep0em
        \item Document data characteristics and impact (\greencheck, \cref{sec:background}, \cref{sec:setup})
        \item Document data processing procedures (\greencheck, \cref{sec:wrf-chem}, \cref{ssec:pca}, \cref{sec:setup})
        \item Quantify SciML model sensitivities (\greencheck, \cref{sec:hyperparameter-sensitivity})
        \item Document the hyperparameter selection process (\greencheck, \cref{sec:hyperparameter-sensitivity})
        \item Use software testing and ensure reproducibility (\greencheck~for algorithm~\cite{jakeman2023pyapprox} and data~\cite{morrow2026enablingzenodo})
        \item Compare developed SciML model against alternatives (\greencheck, \cref{sec:baselines})
        \item Explain the SciML prediction mechanism (\greencheck, \cref{sec:formulation}, \cref{sec:metrics})
    \end{enumerate}
\end{enumerate}